%% file: main.tex
\documentclass[letterpaper,journal]{IEEEtran}
\usepackage{amsmath,amsfonts}
\usepackage{algorithmic}
\usepackage{algorithm}
\usepackage{array}
\usepackage[caption=false,font=normalsize,labelfont=sf,textfont=sf]{subfig}
\usepackage{textcomp}
\usepackage{stfloats}
\usepackage{url}
\usepackage{verbatim}
\usepackage{graphicx}
\usepackage{cite}
\usepackage{mathtools}
\usepackage{tikz}
\usepackage[dvipsnames, x11names]{xcolor}
\usepackage[hidelinks]{hyperref}
\usepackage[acronym]{glossaries-extra}
\usepackage{cleveref}
\usepackage{booktabs, nicematrix}
\usepackage{adjustbox}
\usepackage{circledsteps}
\usepackage{tikz}
\usepackage{tikz-cd}
\usetikzlibrary{shapes.geometric}
\usepackage{ifthen}

\usepackage{amsthm}
\theoremstyle{definition}
\newtheorem{definition}{Definition}
\newtheorem{remark}{Remark}

\newtheorem{theorem}{Theorem}
\newtheorem{lemma}[theorem]{Lemma}

\newtheorem{assumption}{Assumption}

\crefname{assumption}{Assumption}{Assumptions}
\Crefname{assumption}{Assumption}{Assumptions}
\crefname{definition}{Definition}{Definitions}
\Crefname{definition}{Definition}{Definitions}

\input{preamble}

\newboolean{anonymous}
\setboolean{anonymous}{false}

\begin{document}
\bstctlcite{BSTcontrol}

\title{Task-Driven Co-Design of Heterogeneous Multi-Robot Systems}

\ifthenelse{\boolean{anonymous}}{%
  \author{Anonymous Authors}%
}{%
\author{Maximilian Stralz, Meshal Alharbi, Yujun Huang, Gioele Zardini
\thanks{The authors are with the Laboratory for Information \& Decision Systems,
Massachusetts Institute of Technology, Cambridge, MA 02139 {\tt \{mstralz,meshal,yujun233,gzardini\}@mit.edu}}
\thanks{This material is based upon work supported by the Defense Advanced Research Projects Agency (DARPA) under Award No. D25AC00373. The views and conclusions contained in this document are those of the authors and should not be interpreted as representing the official policies, either expressed or implied, of the U.S. Government.}
}
}

\maketitle
\input{Sections/0_abstract}

\begin{IEEEkeywords}
Multi-robot systems, planning, scheduling and coordination, optimization and optimal control, hardware-software co-design
\end{IEEEkeywords}

\input{Sections/1_introduction}
\input{Sections/2_related_work}
\input{Sections/3_mathematical_preliminaries}
\input{Sections/4_problem_formulation}
\input{Sections/5_formal_codesign_model}
\input{Sections/6_case_studies}
\input{Sections/7_conclusion_future_work}

\bibliographystyle{IEEEtran}
\bibliography{references}

\clearpage

\end{document}

%% file: preamble.tex
\newcommand{\defeq}{\mathrel{\raisebox{-0.3ex}{$\overset{\text{\tiny def}}{=}$}}}
\newcommand{\setWithArg}[2]{\{#1 \mid #2\}}

\newcommand{\USet}{\mathtt{U}}
\newcommand{\USetOf}[1]{\USet(#1)}
\newcommand{\upperClosure}[1]{\uparrow\!#1}

\newcommand{\setproduct}{\times}

\newcommand{\mapFollowing}{\circ}

\newcommand{\mapArrOf}[1]{\overset{#1}{\to}}

\newcommand{\op}{^{\mathrm{op}}}
\newcommand{\inv}{^{-1}}

\newcommand{\subsetXP}{X_\posetP}

\newcommand{\SetA}{A}
\newcommand{\SetB}{B}
\newcommand{\SetC}{C}

\newcommand{\setelmA}{m_\SetA}
\newcommand{\setelmB}{m_\SetB}

\newcommand{\mapf}{f}
\newcommand{\mapg}{g}
\newcommand{\maph}{h}

\newcommand{\subsetYB}{Y_\SetB}

\newcommand{\posetP}{P}
\newcommand{\posetQ}{Q}

\newcommand{\tup}[1]{\left( #1 \right)}

\newcommand{\posetleq}{\preceq}

\newcommand{\SetU}{U}

\newcommand{\poselxP}{x_\posetP}
\newcommand{\poselyP}{y_\posetP}
\newcommand{\poselxQ}{x_\posetQ}
\newcommand{\poselyQ}{y_\posetQ}

\newcommand{\powerset}[1]{\mathtt{Pow}(#1)}

\setabbreviationstyle[acronym]{long-short}
\glssetcategoryattribute{acronym}{nohyperfirst}{true}

\newacronym{abk:poset}{poset}{partially ordered set}
\newacronym{acr:mapf}{MAPF}{Multi-Agent Path Finding}
\newacronym{acr:cpp}{CPP}{coverage path planning}
\newacronym{acr:sar}{SAR}{search-and-rescue}
\newacronym{acr:mrta}{MRTA}{multi-robot task allocation}
\newacronym{acr:stc}{STC}{spanning tree coverage}
\newacronym{acr:agd}{AGD}{adaptive grid-based decomposition}
\newacronym{acr:av}{AVs}{autonomous vehicles}
\newacronym{acr:mip}{MIP}{mixed-integer programming}
\newacronym{acr:tsp}{TSP}{traveling salesman problem}
\newacronym{acr:darp}{DARP}{Divide Areas Algorithm for Optimal Multi-Robot Coverage Path Planning}
\newacronym{acr:uav}{UAV}{Unmanned Aerial Vehicle}
\newacronym[longplural={monotone design problems with implementation}]{acr:mdpi}{MDPI}{monotone design problem with implementation}
\newacronym{acr:mas}{MAS}{multi-agent systems}
\newacronym{acr:hv}{HV}{hypervolume}
\newacronym{acr:gdp}{GD+}{generational distance plus}
\newacronym{acr:igdp}{IGD+}{inverted generational distance plus}

\definecolor{dpred}{rgb}{0.7, 0.0, 0.0}
\definecolor{dpgreen}{rgb}{0.0, 0.5, 0.0}

\newcommand{\F}[1]{{\color{dpgreen}#1}}
\newcommand{\R}[1]{{\color{dpred}#1}}

\def\prov{\mathsf{prov}}
\def\req{\mathsf{req}}

\newcommand{\setOfFunctionalitiesOp}[1]{\F{\mathcal{F}_{#1}}^{\mathrm{op}}}
\newcommand{\setOfFunctionalities}[1]{\F{\mathcal{F}_{#1}}}
\newcommand{\setOfResources}[1]{\R{\mathcal{R}_{#1}}}
\newcommand{\setOfImplementations}[1]{\mathcal{I}_{#1}}

\DeclareRobustCommand{\circled}[1]{%
  \tikz[baseline=(char.base)]{
    \node[circle,draw,inner sep=0.5pt] (char) {#1};
  }%
}
\newcommand{\Circle}[1]{{\footnotesize\circled{#1}}}
\newcommand{\map}{M}
\newcommand{\setofmaps}{\mathcal{M}}
\newcommand{\robot}{R}

\newcommand{\dprop}{\mathcal{D}}

\newcommand{\sensing}{\mathcal{S}}

\newcommand{\fleet}{\mathcal{F}}
\newcommand{\planner}{\operatorname{planner}}
\newcommand{\wpoints}{W}
\newcommand{\collwpoints}{\mathbf{W}}
\newcommand{\setcollwpoints}{\mathcal{W}}
\newcommand{\traj}{J}
\newcommand{\colltraj}{\mathbf{J}}
\newcommand{\setcolltraj}{\mathcal{T}}
\newcommand{\tbud}{\tau}
\newcommand{\tbudvec}{\boldsymbol{\tau}}
\newcommand{\exec}{\operatorname{executor}}
\newcommand{\res}{\operatorname{require}}
\newcommand{\resrobot}{\mathcal{C}}
\newcommand{\perfspace}{\mathcal{P}}
\newcommand{\perfvecspace}{\boldsymbol{P}}
\newcommand{\resposet}{\mathcal{C}}
\newcommand{\eval}{\operatorname{evaluator}}
\newcommand{\task}{\operatorname{task}}

%% file: Sections/0_abstract.tex
\begin{abstract}
Designing multi-agent robotic systems requires reasoning across tightly coupled decisions spanning heterogeneous domains, including robot design, fleet composition, and planning.
Much effort has been devoted to isolated improvements in these domains, whereas system-level co-design considering trade-offs and task requirements remains underexplored.
In this work, we present a formal and compositional framework for the
task-driven co-design of heterogeneous multi-robot systems.
Building on a monotone co-design theory, we introduce general abstractions of robots, fleets, planners, executors, and evaluators as interconnected design problems with well-defined interfaces that are agnostic to both implementations and tasks.
This structure enables efficient joint optimization of robot design, fleet composition, and planning under task-specific performance constraints.
A series of case studies demonstrates the capabilities of the framework.
Various component models can be seamlessly incorporated, including new robot types, task profiles, and probabilistic sensing objectives, while non-obvious design alternatives are systematically uncovered with optimality guarantees.
The results highlight the flexibility, scalability, and interpretability of the proposed approach, and illustrate how formal co-design enables principled reasoning about complex heterogeneous multi-robot systems.
\end{abstract}

%% file: Sections/1_introduction.tex
\section{Introduction}
\label{sec:introduction}
\IEEEPARstart{M}{ulti-agent} robotic systems are increasingly deployed in applications where the workload is spatially distributed, time-critical, or inherently parallel, including warehouse logistics~\cite{wurman2008coordinating, standley2010finding}, domestic service~\cite{galceran2013survey}, and safety-critical inspection and search tasks such as gas-leak detection~\cite{murvay2012survey} and de-mining~\cite{galceran2013survey}.
In these settings, the system designer must decide not only \emph{how to plan}, but also \emph{what to build and deploy}: the number of robots, their types (e.g., ground vs. aerial), and the sensing/actuation/energy modules that determine each robot's capabilities.
Crucially, these choices are tightly coupled.
Fleet composition shapes what planning algorithms are feasible and effective; robot-level design choices (battery sizing, actuation, sensors) determine execution horizons and sensing performance; and the task profile and environment determine which couplings matter.
As a result, system design is rarely reducible to optimizing a single component in isolation~\cite{saberifar2022charting, zardini2023dissertation, mcfassel2023reactivity}.

A recurring challenge is that the relevant design space is both \emph{heterogeneous} and \emph{combinatorial}: discrete module choices and planner selections interact with continuous quantities such as time, energy, and performance.
This makes exhaustive evaluation by simulation or prototyping impractical, and it makes it difficult to answer even basic questions such as: 
\emph{When does a heterogeneous fleet outperform a homogeneous one?}
\emph{When is it worth switching planning software rather than upgrading hardware?}
\emph{How do optimal designs change as task requirements or environments change?}

\input{Figures/overview_diagram}

\subsection{Robotics Phase Diagrams}
Motivated by these questions, we advocate a ``phase-diagram'' perspective on multi-agent system design.
In physics, phase diagrams partition external conditions into regimes with qualitatively different optimal structures (e.g., ice/water/vapor).
Analogously, we would like to partition the space of \emph{task and environment parameters} into regimes where qualitatively different \emph{system architectures} are Pareto-optimal.
For instance, boundaries where it becomes optimal to increase fleet size, introduce a new robot type, or switch to a different planning strategy.
Constructing such \emph{multi-agent robotics phase diagrams} requires a representation in which fleet design, planning, execution, and task evaluation can be varied systematically while preserving interpretability and computational tractability.
This paper is a first step toward that goal.

\subsection{Approach}
We address the above challenges with a formal and compositional co-design framework based on the monotone theory of co-design~\cite{censi2015mathematical, zardini2023dissertation}.
We introduce implementation-agnostic abstractions of (i) robots and fleets, (ii) planners that generate waypoints, (iii) executors that produce dynamically feasible trajectories under time/energy budgets, and (iv) evaluators that score trajectories through task-relevant metrics.
Each abstraction is translated into a monotone \emph{design problem} with explicit interfaces: \F{functionalities} (what the component provides) and \R{resources} (what it requires).
Interconnecting these design problems yields a system-level co-design problem that can be queried to compute task-feasible, Pareto-optimal multi-agent system designs.
Importantly, because the interfaces are modular, new robot catalogs, planners, and task metrics can be integrated without changing the surrounding structure.
Repeated queries over families of tasks and maps then naturally support the construction of phase-diagram-like design maps.

\subsection{Statement of Contribution}
The main contributions of this paper are as follows.
First, we provide a unified formalization of heterogeneous multi-robot system design across robot design, fleet composition, planning, execution, and evaluation.
We define a common interface that separates \emph{waypoint generation} (planning), \emph{trajectory realization under feasibility constraints} (execution), and \emph{task scoring} (evaluation), while explicitly accommodating heterogeneous robot capabilities.
Second, we derive a task-driven, system-level co-design model with formal guarantees.
Building on monotone co-design, we cast robot design, fleet composition, planning, and execution as interconnected monotone design problems.
This yields a principled mechanism to compute Pareto-optimal designs that satisfy a specified task profile while minimizing resources such as cost and energy.
Finally, we showcase an instantiation of the models on realistic search-and-coverage missions.
We demonstrate the full pipeline on coverage and probabilistic target-detection tasks using a modular robot catalog (ground and aerial platforms) and multiple planners from the literature.
The case studies illustrate how optimal fleet composition and planner choice vary with task profiles and environments, providing concrete examples of the proposed phase-diagram viewpoint.

\subsection{Paper Organization}
The rest of the paper is organized as follows.
We review related work in \Cref{sec:related_work} and mathematical preliminaries in \Cref{sec:math-preliminary}.
\Cref{sec:problem_formulation} introduces the formal abstractions, which are translated into a co-design model in \Cref{sec:codesign-formalization}.
\Cref{sec:case-studies,sec:numerical_results} present case studies in search and coverage.

%% file: Figures/overview_diagram.tex
\begin{figure}
\setlength{\abovecaptionskip}{0pt}
\centering
\definecolor{teal50}{HTML}{E8FAF4}
\definecolor{teal600}{HTML}{3A9E82}
\definecolor{teal800}{HTML}{2A7A63}
\definecolor{blue50}{HTML}{EAF2FC}
\definecolor{blue400}{HTML}{6AACEC}
\definecolor{blue600}{HTML}{4A8FD4}
\definecolor{blue800}{HTML}{3B6FAA}
\definecolor{coral50}{HTML}{FDF0ED}
\definecolor{coral400}{HTML}{E8876A}
\definecolor{coral600}{HTML}{D46B50}
\definecolor{coral800}{HTML}{B05540}
\definecolor{purple50}{HTML}{F1EEFB}
\definecolor{purple400}{HTML}{A59AED}
\definecolor{purple600}{HTML}{7E72D6}
\definecolor{purple800}{HTML}{5E52AB}
\definecolor{gray50}{HTML}{F7F6F3}
\definecolor{gray200}{HTML}{D4D2CC}
\definecolor{gray400}{HTML}{A3A19B}
\definecolor{gray600}{HTML}{74726D}
\definecolor{green600}{HTML}{5EA036}
\definecolor{amber400}{HTML}{F0B95A}
\definecolor{mapbg}{HTML}{F4F3F0}
\definecolor{pathblue}{HTML}{6AACEC}
\definecolor{pathred}{HTML}{E8876A}
\definecolor{pathgreen}{HTML}{6ABF7B}
\definecolor{shapeblue}{HTML}{4A8FD4}
\definecolor{shapered}{HTML}{D46B50}
\definecolor{shapegreen}{HTML}{4AA85C}
\definecolor{framegray}{HTML}{B0AEA8}
\definecolor{textbase}{HTML}{4A4945}
\tikzset{
  every node/.style={font=\sffamily\tiny},
  outerbox/.style={
    rounded corners=3pt, draw=framegray, line width=0.5pt,
  },
  zone/.style={
    rounded corners=3pt, draw=framegray, line width=0.45pt, fill=gray600!2.5,
  },
  seclabel/.style={
    font=\sffamily\tiny\bfseries, text=textbase,
    anchor=west,
  },
  sectext/.style={
    font=\sffamily\tiny, text=gray600,
    anchor=west,
  },
  module/.style 2 args={
    rounded corners=2pt, inner ysep=1.25pt, minimum height=8pt, minimum width=30pt,
    font=\sffamily\tiny, draw=#1, fill=#2, line width=0.4pt, align=center,
    anchor=west,
  },
  alg/.style={
    rounded corners=2pt, inner ysep=1.25pt, minimum height=8pt, minimum width=22pt,
    font=\sffamily\tiny, draw=purple600!90, fill=purple50!85, text=purple800, line width=0.4pt,
    anchor=west,
  },
  arrow/.style={
    -{Stealth[length=5pt,width=3.5pt]}, textbase, line width=0.8pt,
  },
  obj/.style={
    rounded corners=2pt, inner sep=2pt, line width=0.3pt, align=center,
    font=\sffamily\tiny, draw=gray200, fill=gray50, text=gray600,
    anchor=west,
  },
  codesign/.style={
    rounded corners=3pt, minimum width=240pt, minimum height=10pt, align=center,
    font=\sffamily\footnotesize\bfseries, fill=gray600!10, draw=framegray, text=gray600,
    line width=0.5pt, anchor=center,
  },
  shape1/.style={
    regular polygon, regular polygon sides=3,
    inner sep=0, minimum size=5, fill=shapeblue,
    anchor=center,
  },
  shape2/.style={
    regular polygon, regular polygon sides=3,
    inner sep=0, minimum size=5, fill=shapered,
    anchor=center,
  },
  shape3/.style={
    regular polygon, regular polygon sides=3,
    inner sep=0, minimum size=5, fill=shapegreen,
    anchor=center,
  },
  path/.style={
    line width=0.6pt, opacity=0.55, -{Stealth[length=3pt,width=2pt]},
  },
  pareto/.style={line width=0.75pt, coral800!75!red, opacity=0.6},
}
\begin{adjustbox}{max width=1\columnwidth, center}
\begin{tikzpicture}[x=1pt, y=-1pt]

\draw[outerbox] (0, 14) rectangle (250, 200);
\node[font=\sffamily\footnotesize, text=textbase, anchor=north] at (125, 0) {Heterogeneous Multi-Robot System Design Space};

\begin{scope}[shift={(5, 18)}]
\draw[zone] (0, 12) rectangle (105, 90);
\node[font=\sffamily\scriptsize, text=textbase, anchor=north] at (55, 0) {Design Decisions};

\begin{scope}[shift={(0, 20)}]
    \node[seclabel] at (0, 0) {Robot Components};
    \node[module={coral600}{coral50}, text=coral800] at (4, 11) {Type};
    \node[module={teal600}{teal50}, text=teal800] at (36, 11) {Actuator};
    \node[module={teal600}{teal50}, text=teal800] at (68, 11) {Battery};
    \node[module={blue600}{blue50}, text=blue800] at (4, 21) {Sensor};
    \node[module={blue600}{blue50}, text=blue800] at (36, 21) {...};
\end{scope}

\begin{scope}[shift={(0, 55)}]
    \node[seclabel] at (0, 0) {Fleet Composition};
    \node[sectext] at  (8, 8)  {UAV M.};
    \node[shape1]  at   (7,8) {};
    \node[sectext] at (29, 9)  {$\times N_1$};
    \node[sectext] at  (8, 17) {UAV L.};
    \node[shape2]  at   (7,17) {};
    \node[sectext] at (29, 18) {$\times N_2$};
    \node[sectext] at  (8, 26) {Ground};
    \node[shape3]  at  (7, 26) {};
    \node[sectext] at (29, 27) {$\times N_3$};
\end{scope}

\draw[line width=0.4pt, opacity=0.2, framegray] (52.5, 60) -- (52.5, 85);

\begin{scope}[shift={(52.5, 55)}]
    \node[seclabel] at (0, 0) {Planning Algorithm};
    \node[alg] at (14, 8) {AGD};
    \node[alg] at (14, 18) {DARP};
    \node[alg] at (14, 28) {MRTA};
\end{scope}

\end{scope}

\begin{scope}[shift={(140, 18)}]
\draw[zone] (0, 12) rectangle (105, 90);
\node[font=\sffamily\scriptsize, text=textbase, anchor=north] at (55, 0) {Deployment and Evaluation};

\begin{scope}[shift={(0, 15)}]
    \draw[fill=mapbg, draw=framegray, line width=0.3pt, rounded corners=1pt] (2.5,0) rectangle (102.5, 40);
    \foreach \x in {10,20,...,90} {
    \draw[gray200, line width=0.2pt] (\x+2.5,0) -- (\x+2.5,40);
    }
    \foreach \y in {10,20,...,30} {
    \draw[gray200, line width=0.2pt] (2.5,\y) -- (102.5,\y);
    }
    \draw[pathgreen, path]
      (7.5, 35) -- (100, 35);
    \node[shape3] at (7.5, 35) {};
    \draw[pathred, path]
      (57.5, 5) -- (7.5, 5) -- (7.5, 15) -- (57.5, 15) -- (57.5, 25) -- (7.5, 25);
    \node[shape2] at (57.5, 5) {};
    \draw[pathblue, path]
      (67.5,5) -- (97.5,5) -- (97.5,15) -- (67.5,15) -- (67.5,25) -- (97.5,25);
    \node[shape1] at (67.5,5) {};
    \node[sectext] at (2.5, 5.5) {Map $M$};
\end{scope}

\begin{scope}[shift={(0, 63.5)}]
    \node[seclabel] at (0, 0) {Task Profile};
    \node[sectext] at (0, 9)  {Coverage};
    \node[sectext] at (26, 9)  {$\geq 95\%$};
    \node[sectext] at (0, 17) {Detection };
    \node[sectext] at (26, 17) {$\geq 90\%$};
\end{scope}

\draw[line width=0.4pt, opacity=0.2, framegray] (52.5, 65) -- (52.5, 85);

\begin{scope}[shift={(52.5, 63.5)}]
    \node[seclabel] at (0, 0) {Objectives};
    \node[obj] at (5, 8) {time};
    \node[obj] at (22.5, 8) {cost};
    \node[obj] at (5, 18) {energy};
\end{scope}

\end{scope}

\draw[arrow]  (115, 55) -- (135, 55);
\node[seclabel, anchor=center] at (125, 50) {deploy};

\draw[arrow] (135, 85) -- (115, 85);
\node[seclabel, anchor=center] at (125, 80) {constrain};

\begin{scope}[shift={(0, 128)}]
    \node[codesign] at (125, 0) {Task-Driven Co-Design Optimization};
\end{scope}
\draw[arrow] (57.5, 109) -- (57.5, 120);
\draw[arrow] (192.5, 109) -- (192.5, 120);
\draw[arrow] (125, 136) -- (125, 147);


\newcommand{\paretoplot}[4]{%
  \begin{scope}[shift={(#1, 150)}]
    \draw[gray400, line width=0.3pt] (0, 5) -- (0, 40);
    \draw[gray400, line width=0.3pt] (0, 40) -- (70, 40);
    \node[font=\sffamily\tiny, text=coral800, anchor=center] at (35, 45) {#2};
    \node[font=\sffamily\tiny, text=coral800, anchor=center] at (0, 0) {#3};
    \draw[pareto] plot[smooth, tension=0.55] coordinates {#4};
    \draw[only marks, mark=*, mark size=0.75pt, coral800!75!red!75] plot coordinates {#4};
  \end{scope}
}

\paretoplot{10}{Cost}{Time}{(5,10) (10,20) (20,27) (50,33) (65,35)}
\paretoplot{90}{Energy}{Cost}{(5,10) (30,11) (50,15) (65,35)}
\paretoplot{170}{Energy}{Time}{(5,10) (10,20) (20,25) (50,27) (65,35)}

\draw[only marks, mark=*, mark size=0.75pt, coral800!75!red!75] plot coordinates {(35, 157)};
\node[font=\sffamily\fontsize{4}{5}\selectfont, align=center, text=gray600, anchor=west] at ((34, 157.25) {Pareto-optimal designs};

\end{tikzpicture}
\end{adjustbox}
\caption{
Overview of the proposed task-driven co-design approach for heterogeneous multi-robot systems. 
Design decisions spanning robot components, fleet composition, and planning algorithm selection are jointly optimized through the monotone co-design framework subject to deployment constraints including map geometry, task profiles, and performance objectives.
}
\label{fig:overview}
\end{figure}

%% file: Sections/2_related_work.tex
\section{Related Work}
\label{sec:related_work}
This paper sits at the intersection of multi-robot planning, heterogeneous fleet design, and computational co-design.
We briefly review the most relevant threads and clarify the gap addressed by our framework.

\subsection{Multi-Agent Robotic Systems and Coverage/Search Planning}
As single-robot autonomy has matured, a substantial body of work has focused on multi-agent robotic systems across application domains such as environmental monitoring (including forest-fire monitoring)~\cite{merino2012unmanned, mesbahi2019team, rudolph2021range}, logistics and warehouse coordination~\cite{wurman2008coordinating}, and safety-critical operations such as industrial inspection~\cite{breitenmoser2010magnebike}, disaster response~\cite{debusk2010unmanned}, and search-and-rescue (SAR)~\cite{liu2015supervisory, queralta2020collaborative, kazemdehbashi2024adaptive}.
Coverage and search tasks are a particularly prominent benchmark family, with surveys highlighting the breadth of \gls{acr:cpp} methods and their assumptions~\cite{galceran2013survey, almadhoun2019survey}.
Many \gls{acr:cpp} and search planners are developed under the assumption of a fixed fleet with given sensing and motion capabilities, often homogeneous.

Representative approaches illustrate both progress and limitations.
The authors of~\cite{hu2022large} propose a heterogeneous multi-robot coverage framework that combines environment decomposition with multi-robot task allocation (MRTA) and ergodic trajectory optimization.
Authors of~\cite{kapoutsis2017darp} introduce DARP for equitable partitioning and coverage with homogeneous teams.
Authors of~\cite{kazemdehbashi2024adaptive} develop AGD for heterogeneous UAV coverage in maritime SAR by adapting spatial discretization to sensing footprints.
More broadly, heterogeneous teams have been studied for their potential advantages in robustness, flexibility, and efficiency in SAR and cooperative search~\cite{maza2007multiple, kantaros2015distributed, surmann2019integration, queralta2020collaborative}.
However, these works typically treat robot capabilities and fleet composition as given inputs, rather than decision variables to be optimized jointly with the planning stack.

\subsection{Hardware-Aware Planning and Robot-Level Co-Design}
A smaller but growing literature recognizes that hardware choices and planning performance are coupled.
For example, planners for guaranteed coverage under severe sensing limitations have been studied in~\cite{lewis2018guaranteed}, and hardware-aware viewpoints for designing minimal robots for coverage are explored in~\cite{shell2021design}.
From the co-design perspective, authors of~\cite{carlone2019robot} formulate the selection of sensing, actuation, and computing modules as a combinatorial optimization problem subject to resource constraints.
These approaches provide valuable ingredients, but they typically focus on either (i) optimizing the design of an individual platform for a fixed task model, or (ii) optimizing planning/task allocation for a fixed platform and fleet.
What remains comparatively underdeveloped is an end-to-end formulation that treats \emph{robot design, fleet composition, planner choice, and task specification} as coupled design decisions within one queryable model.

\subsection{Monotone Computational Co-Design}
Our work builds on the monotone theory of co-design~\cite{censi2015mathematical, zardini2023dissertation, censi2024}, which models complex systems as interconnected design problems with partially ordered \F{functionalities} and \R{resources}.
A key advantage of this paradigm is that modular subsystems can be composed while preserving formal guarantees and enabling multi-objective Pareto analysis.
Monotone co-design has been applied to a range of domains, including co-design of autonomous systems that jointly reasons about hardware and autonomy stacks~\cite{milojevic2025codei}, strategic co-design in Formula~1~\cite{neumann2024codesignf1}, and mobility-system design that couples vehicle characteristics with fleet sizing and infrastructure decisions~\cite{zardini2022codesignmobility}.
These applications primarily address single-agent systems or fleets with limited heterogeneity, leaving the co-design of fully heterogeneous multi-agent systems largely unexplored.
Complementary lines of work have also begun to target co-design at the swarm/fleet level with scalability in mind~\cite{wilhelm2026swarmcode}.
These successes motivate its use for multi-agent robotics.

\subsection{Gap and Positioning}
Existing co-design applications in robotics primarily emphasize \emph{single-platform} design or infrastructure-scale systems, whereas multi-agent deployments introduce additional decision layers: fleet size and composition, coordination software, and task-driven evaluation across environments.
Moreover, capturing these layers in a way that supports systematic variation, e.g., sweeping task requirements or environment classes, is essential for the kind of design-space characterization advocated in this paper.
Our contribution is therefore not a new coverage planner per se, but a \emph{formal and compositional framework} that integrates heterogeneous robot and fleet design with planning, execution, and task evaluation.
This integration provides the computational substrate needed to move from isolated design studies to phase-diagram-like characterizations of when different multi-robot system architectures become optimal as tasks and environments change.

%% file: Sections/3_mathematical_preliminaries.tex
\section{Mathematical preliminaries}
\label{sec:math-preliminary}

\subsection{Sets and Functions}
\label{subsec:sets-functions}
We write~$\mapf \colon \SetA  \to \SetB$ for functions between sets $\SetA$ and $\SetB$ and indicate the action of $\mapf$ on elements by $\setelmA \mapsto \mapf(\setelmA)$.
We call $\SetA$ the \emph{domain} of $\mapf$, and $\SetB$ its \emph{co-domain}.
We will often use the broad term \emph{map} to refer to functions.
For a map~$\mapf \colon \SetA \to \SetB$, we denote the pre-image of $\subsetYB \subseteq \SetB$ as the set of elements in $\SetA$ whose image lies in $\subsetYB$, $\mapf\inv\left(\subsetYB\right) \defeq \setWithArg{\setelmA \in \SetA}{\mapf(\setelmA) \in \subsetYB}$.
Given maps $\mapf \colon \SetA \to \SetB$ and $\mapg \colon \SetB \to \SetC$, their \emph{composite}
is the map $\mapg \mapFollowing \mapf \colon \SetA \to \SetC$ that sends $ \setelmA \mapsto \mapg(\mapf(\setelmA))$. To express composition diagrammatically, we write:~$
\SetA \mapArrOf{\mapf} \SetB \mapArrOf{\mapg} \SetC$.

We write $\SetA \setproduct \SetB$ for the \emph{Cartesian product} of sets. Its elements are tuples $\tup{\setelmA, \setelmB}$, where $\setelmA \in \SetA$ and $\setelmB \in \SetB$.
Given maps $\mapf \colon \SetA \to \SetB$ and $\mapg \colon \SetA' \to \SetB'$, their \emph{product} is
\begin{equation*}
    \begin{aligned}
        \mapf \setproduct \mapg \colon \SetA \setproduct \SetA'  & \to \SetB \setproduct \SetB', \\
        \tup{\setelmA, \setelmA'} &\mapsto \tup{\mapf(\setelmA), \mapg(\setelmA')}.
    \end{aligned}
\end{equation*}
Given $\maph \colon \SetA \setproduct \SetB \to \SetC$, we denote its \emph{partial evaluation} by
\begin{equation*}
    \begin{aligned}
        \maph(-,\setelmB) \colon \SetA  & \to \SetC, \\
        \setelmA &\mapsto \maph(\setelmA, \setelmB).
    \end{aligned}
\end{equation*}

\subsection{Background on Orders}
\label{sec:app_order}

\begin{definition}[Poset]
A \emph{\gls{abk:poset}} is a tuple $\mathcal{P} =\tup{ P,\preceq_\mathcal{P}}$, where $P$ is a set and~$\preceq_\mathcal{P}$ is a partial order (a reflexive, transitive, and antisymmetric relation). 
If clear from context, we use~$\posetP$ for a \gls{abk:poset}, and~$\posetleq$ for its order.
\end{definition}

\begin{definition}[Opposite poset]
The \emph{opposite} of a \gls{abk:poset}~$\mathcal{P} = \tup{ P,\preceq_\mathcal{P}}$ is the poset $\mathcal{P}\op \defeq\tup{ P,\posetleq_{\mathcal{P}}\op }$ with the same elements and reversed ordering:
$
\poselxP \posetleq_{\mathcal{P}}\op \poselyP \Leftrightarrow 
\poselyP \posetleq_{\mathcal{P}}\poselxP
$.
\end{definition}

\begin{definition}[Product poset]
Given \glspl{abk:poset} $\tup{P,\preceq_{\mathcal{P}}}$ and $\tup{Q,\preceq_{\mathcal{Q}}}$, their \emph{product} $\tup{P\times Q,\preceq_{\mathcal{P}\times \mathcal{Q}}}$ is the poset with
\begin{equation*}
    \tup{\poselxP,\poselxQ}\preceq_{\mathcal{P}\times \mathcal{Q}}\tup{\poselyP,\poselyQ} \Leftrightarrow (\poselxP \preceq_{\mathcal{P}} \poselyP) \wedge (\poselxQ \preceq_\mathcal{Q} \poselyQ).
\end{equation*}
\end{definition}

\begin{definition}[Upper closure]\label{def:upper-losure}
    Let $\posetP$ be a \gls{abk:poset}. The \emph{upper closure} of a subset $\subsetXP \subseteq \posetP$ contains all elements of $\posetP$ that are greater or equal to some $\poselyP \in \subsetXP$:
    \begin{equation*}
        \upperClosure{\subsetXP} \defeq \setWithArg{\poselxP \in \posetP}{\exists \poselyP \in \subsetXP : \poselyP \posetleq_\posetP \poselxP}.
    \end{equation*}
\end{definition}

\begin{definition}[Upper set]\label{def:uppersets-of-posets}
    A subset $\subsetXP \subseteq \posetP$ of a \gls{abk:poset} is called an \emph{upper set} if it is upwards closed: $\upperClosure{\subsetXP} = \subsetXP$.
    We write $\USetOf{\posetP}$ for the set of upper sets of $\posetP$.
    We regard $\USetOf{\posetP}$ as partially ordered under $\SetU \posetleq \SetU' \Leftrightarrow \SetU \supseteq \SetU'$.
\end{definition}

\begin{definition}[Monotone map]
A map $f\colon \posetP \to \posetQ$ between \glspl{abk:poset} $\langle P, \preceq_\mathcal{P} \rangle$ and $\langle Q, \preceq_\mathcal{Q} \rangle$ is  \emph{monotone} if $x\preceq_\mathcal{P} y$ implies $f(x) \preceq_\mathcal{Q} f(y)$. Monotonicity is preserved by composition and products.
\end{definition}

%% file: Sections/4_problem_formulation.tex
\section{Problem Formulation}
\label{sec:problem_formulation}

This section introduces the terminology and abstractions used to model and design heterogeneous multi-robot systems.
We consider a modular pipeline in which (i) a \emph{planner} assigns waypoints, (ii) an \emph{executor} produces dynamically feasible trajectories consistent with robot capabilities and operating-time budgets, and (iii) an \emph{evaluator} computes task-relevant performance metrics from the resulting trajectories.
A \emph{task profile} specifies which performance metric levels are required.

\paragraph*{Pipeline abstraction}
Given a fleet $\fleet$ operating in a map $\map$, the overall pipeline can be summarized as
\begin{equation*}
\tup{\fleet,\map}
\xrightarrow{\planner}
\collwpoints
\xrightarrow{\exec(\cdot,\tbudvec)}
\colltraj
\xrightarrow{\eval}
\mathbf{m}
\xrightarrow{\task}
\{0,1\},
\end{equation*}
where $\collwpoints$ is a fleet-level waypoint assignment, $\tbudvec$ is a vector of per-robot operating-time budgets, $\colltraj$ is the resulting collection of trajectories, $\mathbf{m}$ is the vector of performance metric values, and $\task$ indicates whether the task profile is satisfied.

\subsection{Problem Overview}
\label{subsec:problem_formulation_overview}
We begin by formalizing the \emph{environment} in which the fleet operates.

\begin{definition}[Map]
\label{def:map}
A \emph{map}~$\map$ is the space of interest in which the fleet operates. 
We model it as a subset of the Euclidean space~$\map \subset \mathbb{R}^k$, where typically~$k\in\{2,3\}$ depending on whether the workspace is planar or volumetric.
\end{definition}

Next, we formalize the abstraction used to represent individual robots.

\begin{definition}[Robot]
\label{def:robot}
A \emph{robot}~$\robot$ is a tuple~$\robot=\tup{\dprop, \sensing}$, 
where~$\dprop$ denotes the robot's dynamical properties (i.e., motion-related capabilities) and~$\sensing$ denotes its sensing capabilities.
Moreover, given a resource \gls{abk:poset}~$\tup{\resrobot, \preceq_\resrobot}$, there exists a map~$\res \colon \mathcal{R} \to \resrobot$ from the set of robot designs to required resources, where~$\res(\robot)$ quantifies the resources needed to realize~$\robot$ (e.g., cost, power, energy capacity).
\end{definition}
The exact definition of the underlying objects that constitute a robot will be given in \Cref{subsec:abstraction_details}.
Intuitively, a robot exposes \emph{capabilities} $\tup{\dprop,\sensing}$ to planning/execution, while requiring corresponding resources $\res(\robot)$ to be realized.

\begin{definition}[Fleet]
\label{def:fleet}
A \emph{fleet} $\fleet$ of size~$n$ is an indexed collection of robots~$\fleet= \{\robot_1, \ldots, \robot_n\}$.
\end{definition}

Using an indexed collection (rather than a set) allows fleets to include multiple identical robots and avoids ambiguity about multiplicities.
Throughout the paper, we allow heterogeneity: robots may differ in dynamics, sensing, and underlying state/control representations, but they interact through the abstract interfaces defined below.

\subsubsection{Planner}
The planner assigns waypoint sequences to robots so that, collectively, the fleet can attempt to accomplish the task on map $\map$.

\begin{definition}[Planner]
\label{def:planner}
A \emph{planner} is a map~$\planner:\fleet \times \map \to \collwpoints$, where~$\collwpoints$ is a fleet-level collection of waypoint sequences (formalized in \cref{def:collection_waypoints}).
\end{definition}

We use the term \emph{task} to denote the abstract mission objective (e.g., area coverage, search-and-rescue, leakage detection).
In our formulation, the task influences the planner through its waypoint-generation logic, while the formal specification of task requirements is captured downstream by performance metrics and task profiles (cf. \Cref{def:metric,def:task_profile}).
This abstraction allows the framework to support a broad range of tasks without changing its overall structure; only the planner implementation changes.

\begin{assumption}[Centralized offline planning]
\label{ass:centralized_planning}
Planning is performed centrally and offline. 
The framework does not explicitly model inter-robot communication or distributed information exchange. 
Accordingly, the planner has global access to the fleet’s effective dynamical and sensing capabilities and the map when computing waypoint assignments.
\end{assumption}

\subsubsection{Executor}
The \emph{executor} converts waypoints into dynamically feasible robot trajectories, subject to per-robot operating-time budgets.

\begin{definition}[Trajectory]
\label{def:trajectory}
A \emph{trajectory}~$\traj$ is a map~$\traj \colon [0,T_i] \to \map$, which associates to each time~$t\in[0,T_i]$ a position~$\traj(t)\in \map$ in the environment.
\end{definition}

\begin{definition}[Collection of trajectories]
\label{def:collection_trajectory}
A \emph{collection of trajectories} for a fleet~$\fleet=\tup{\robot_1,\ldots,\robot_n}$ is the tuple~$\colltraj = (\traj_1,\ldots,\traj_n),$ where $\traj_i$ is the trajectory executed by robot $\robot_i$.
\end{definition}

For each robot~$\robot_i$, let~$\tbud_i$ denote the maximum allowed operating duration.
We aggregate these budgets for all robots into~$\tbudvec=\tup{\tbud_1,\ldots,\tbud_n}\in \mathbb{R}_{\ge 0}^n$.

\begin{definition}[Executor]
\label{def:executor}
An \emph{executor} is a map
\begin{equation*}
\exec:\fleet \times \collwpoints \times \mathbb{R}_{\ge 0}^n \to \colltraj
\end{equation*}
that takes a fleet~$\fleet$, waypoint collection~$\collwpoints$, and time budget vector~$\tbudvec$, and returns a collection of dynamically feasible trajectories $\colltraj=\tup{\traj_1,\ldots,\traj_n}$ such that each $\traj_i$ is defined on $[0,T_i]$ with $T_i \le \tbud_i$ for all $i=1,\ldots,n$.
\end{definition}

\begin{remark}[Time budget monotonicity intuition]
The constraint $T_i \le \tbud_i$ enforces that each robot's trajectory fits within its available operating-time budget (e.g., due to battery capacity).
Providing more operating time cannot invalidate previously feasible behaviors: any trajectory feasible under budget $\tbud_i$ remains feasible under any larger budget $\tbud_i' \ge \tbud_i$, for instance by executing the same motion and then idling.
\end{remark}

\subsubsection{Evaluator, Metrics, and Task Profiles}
The evaluator computes performance metrics from the executed trajectories. 
Performance metrics may incorporate additional (possibly hidden) environment information (e.g. target locations) as internal parameters, even if that information is not provided to the planner.

\begin{definition}[Performance metric]
\label{def:metric}
A \emph{performance metric} is a monotone map
\begin{equation*}
m:\fleet \times \colltraj \times \map \to \perfspace,
\end{equation*}
where~$\tup{\perfspace,\preceq_{\perfspace}}$ is a \gls{abk:poset} representing the performance space.
Without loss of generality, metrics are defined so that ``better'' performance corresponds to larger elements under $\preceq_{\perfspace}$.
\end{definition}

\begin{remark}
Typical examples include covered area or coverage percentage (with $\perfspace=\mathbb{R}_{\geq 0}$ ordered by $\ge$) and number of detected targets (with $\perfspace=\mathbb{N}$ ordered by $\ge$).
\end{remark}

The evaluator can represent the effects of time-varying sensing phenomena, including occlusions and clutter, within the performance assessment.
Given $q$ performance metrics $m_1,\ldots,m_q$ (with codomains $\perfspace_1,\ldots,\perfspace_q$), the evaluator returns their values as a vector (performance vector).

\begin{definition}[Evaluator]
\label{def:evaluator}

The \emph{evaluator} is the map
\begin{equation*}
\eval(\fleet,\colltraj,\map) \coloneq
\bigl(m_1(\fleet,\colltraj,\map),\ldots,m_q(\fleet,\colltraj,\map)\bigr).
\end{equation*}
\end{definition}

A task profile specifies required performance levels on the performance metric vector returned by the evaluator.

\begin{definition}[Task profile]
\label{def:task_profile}
Let $\mathbf{m} = \tup{m_1, \ldots, m_q}\in \perfspace_1\times\cdots\times\perfspace_q$ denote the performance metric-value vector produced by the evaluator, i.e.\ $\mathbf{m}=\eval(\fleet,\colltraj,\map)$.
Let $\boldsymbol{\lambda}=\tup{\lambda_1,\ldots,\lambda_q}$ be a vector of thresholds with~$\lambda_i\in\perfspace_i$.
A \emph{task profile} is the Boolean map $\task:\perfspace_1\times\cdots\times\perfspace_q\to\{0,1\}$ given by
\begin{equation*}
    \task(\mathbf{m}) =
    \begin{cases}
        1, & \text{if } m_i \succeq_{\perfspace_i} \lambda_i \quad \forall i = 1,\ldots,q, \\
        0, & \text{otherwise}.
    \end{cases}
\end{equation*}
We say the task profile is \emph{satisfied or fulfilled} if~$\task(\mathbf{m})=1$.
\end{definition}

\begin{remark}
For example, consider a coverage task with two performance metrics:
$m_1 \in \tup{\mathbb{R},\ge}$ is the covered percentage of $\map$ and~$m_2 \in \tup{\mathbb{N},\ge}$ is the number of detected targets.
A threshold $\boldsymbol{\lambda}=\tup{95,2}$ requires at least $95\%$ coverage and at least two detections.
\end{remark}

\subsection{Abstraction Details}
\label{subsec:abstraction_details}

In \Cref{subsec:problem_formulation_overview}, we provided an overview of the design problem and its main components. In this section, we expand on the details of each component.

\subsubsection{Robot Modeling}
We now expand the definitions of $\dprop$, $\sensing$, and the resource space.

\begin{definition}[Robot's dynamical properties]
\label{def:robot_dyn_prop}
A robot's \emph{dynamical properties} are described by a tuple
\begin{equation*}
    \dprop = \tup{d_1, d_2, \ldots, d_w},
\end{equation*}
where each component $d_i\in D_i$ is a dynamical parameter, and each~$\tup{D_i,\preceq_{D_i}}$ is a \gls{abk:poset}.
The partial orders are chosen so that~$d_i \preceq_{D_i} d_i'$ means ``$d_i'$ is at least as capable as $d_i$'' along that dimension.
\end{definition}

Examples include bounds such as maximum speed $v_{\max}$, maximum acceleration $a_{\max}$, or minimum turning radius $r_{\text{turn}}$.
These parameters may enter a robot model as constraints on admissible motion.
Concretely, let the robot's motion be governed by a parameterized model
\begin{align}
\label{eq:robot_model}
    \dot{x}(t) = F(x(t), u(t); \dprop),
\end{align}
where $x$ is the state, $u$ is the control input, and $\dprop$ parametrizes the admissible state/input evolution (e.g., via $|v(t)|\le v_{\max}$, $|a(t)|\le a_{\max}$, or curvature bounds induced by $r_{\text{turn}}$).
Thus, $\dprop$ determines the set of dynamically feasible motions under \Cref{eq:robot_model}.

\begin{remark}[Robot heterogeneity]
\label{rem:robot_heterogeneity}
To accommodate robots with distinct dynamical models (e.g., ground vehicles vs. UAVs), we include a categorical component $d_{\mathrm{type}}\in D_{\mathrm{type}}$ inside $\dprop$ that selects the appropriate model class in \Cref{eq:robot_model}.
Type-specific parameters can be incorporated by extending $\dprop$ with additional coordinates (e.g., maximum thrust or climb rate for UAVs); parameters that are not meaningful for a given type can be assigned distinguished ``null'' values and ignored by the corresponding model $F$.
\end{remark}

Similarly, we now define the sensing capabilities of a robot.

\begin{definition}[Robot's sensing capabilities]
\label{def:robot_sens_prop}
A robot's \emph{sensing capabilities} are described by a tuple~$\sensing=\tup{s_1,\ldots,s_v},$ where each component $s_j\in S_j$ encodes a sensing-related parameter, and each $\tup{S_j,\preceq_{S_j}}$ is a \gls{abk:poset}.
As for dynamics, the partial orders are chosen so that $s_j \preceq_{S_j} s_j'$ means ``$s_j'$ is at least as capable as $s_j$'' along that sensing dimension.
\end{definition}

Examples include sensing range, resolution, field-of-view, and detection certainty.
To represent different sensing modalities and sensor suites, one may include a categorical or set-valued component inside $\sensing$ (e.g., modeling the sensor suite as a subset of available modalities ordered by set inclusion, so that \texttt{camera} $\preceq$ \texttt{camera+LiDAR}).

Finally, the resource map $\res(\cdot)$ associates to each robot design the resources required to realize its capabilities.

\begin{definition}[Robot resource space]
\label{def:robot_resources}
Let $\tup{\resposet_i,\preceq_{\resposet_i}}$, for $i = 1,\dots,k$, \glspl{abk:poset} describing individual resource dimensions (e.g., cost, power, energy capacity). 
The \emph{robot resource space} is the product \gls{abk:poset} $\resrobot \coloneq \prod_{i=1}^{k} \resposet_i$, equipped with the standard product order.
\end{definition}

\subsubsection{Planner Modeling}
Since the planner’s objective is to generate, for each robot $\robot_i$ in the fleet $\fleet$, a sequence of waypoints such that the fleet accomplishes the given task on the map $\map$, we first define the notion of a robot’s waypoints.

\begin{definition}[Waypoints of a robot]
\label{def:waypoints_robot}
The \emph{waypoint sequence} assigned to robot $\robot_i$ is an ordered list
\begin{equation*}
\wpoints_i = \tup{w_{i,1},\ldots,w_{i,p_i}},
\qquad w_{i,j}\in \map,
\end{equation*}
where $p_i\in\mathbb{N}$ may depend on the robot and the planning instance.
\end{definition}

\begin{definition}[Collection of waypoints]
    \label{def:collection_waypoints}
    The \emph{collection of waypoints} for a fleet $\fleet = \tup{\robot_1,\dots,\robot_n}$ is the tuple~$\collwpoints = \tup{\wpoints_1,\ldots,\wpoints_n},$ where $\wpoints_i$ is the waypoint sequence assigned to robot $\robot_i$.
\end{definition}

\subsection{Planning for multiple maps}
In many applications, planners and fleet designs must generalize across multiple environments rather than a single fixed map.
We capture this by introducing a finite set of maps.

\begin{definition}[Set of maps]
    \label{def:setofmaps}
A \emph{set of maps} is a finite set~$\setofmaps = \{\map_1,\ldots,\map_p\}$, where each $\map_i$ is a map in the sense of \Cref{def:map}.
\end{definition}

Applying a planner to each $\map_i$ yields a corresponding set of waypoint collections.

\begin{definition}[Set of waypoint collections]
\label{def:set_waypoint_collections}
Given a fleet $\fleet$ and planner $\planner$, the \emph{set of waypoint collections} induced by  $\setofmaps$ is
\begin{equation*}
\setcollwpoints = \{\collwpoints_1,\ldots,\collwpoints_p\},
\qquad
\collwpoints_i \coloneq \planner(\fleet,\map_i).
\end{equation*}
\end{definition}

Similarly, applying the executor to each waypoint collection $\collwpoints_i$ yields a corresponding set of trajectory collections.

\begin{definition}
    \label{def:set_trajectory_collections}
    Given a fleet $\fleet$ and a time budget vector $\tbudvec$ for the maximum allowed operating duration for each robot, the \emph{set of trajectory collections} induced by the \emph{set of waypoint collections} is
    \begin{equation*}
        \setcolltraj = \{ \colltraj_1, \ldots, \colltraj_p \},
        \qquad
        \colltraj_i \coloneq \exec(\fleet, \collwpoints_i, \tbudvec).
    \end{equation*}
\end{definition}

\begin{definition}[Evaluation over multiple maps]
For a set of maps $\setofmaps=\{\map_1,\ldots,\map_p\}$ and the corresponding set of trajectory collections
$\setcolltraj=\{\colltraj_1,\ldots,\colltraj_p\}$ induced by a fixed fleet, planner, and time budget vector, the definition of a performance metric remains unchanged:
each metric is still evaluated on a single pair $(\colltraj_i,\map_i)$.

To evaluate performance over multiple environments, we therefore consider a family of performance metric instances applied to selected map--trajectory pairs. Concretely, if
\[
\mathcal{I}\subseteq \{1,\ldots,q\}\times \{1,\ldots,p\}
\]
is an index set specifying which metric is evaluated on which map, then the associated multi-map evaluator is
\begin{equation*}
\eval_{\setofmaps}(\fleet,\setcolltraj,\setofmaps)
\coloneq
\bigl(m_{\ell}(\fleet,\colltraj_i,\map_i)\bigr)_{(\ell,i)\in\mathcal{I}}.
\end{equation*}
This allows multiple metrics to be evaluated on the same map--trajectory pair, e.g., coverage and number of detected targets on $\map_i$.
\end{definition}

\begin{remark}
This construction enables evaluating a fixed fleet (and fixed planner) across multiple environments.
More generally, one may compare different planners by evaluating their induced waypoint-collection sets over the same $\setofmaps$, while keeping the fleet design fixed.
\end{remark}

%% file: Sections/5_formal_codesign_model.tex
\section{Formal Co-Design Model}
\label{sec:codesign-formalization}
In this section, we build on the formalization introduced in \Cref{sec:problem_formulation} and cast the heterogeneous multi-robot system design problem as an optimization problem.
Concretely, we seek to jointly determine (i) the design of the individual robots, (ii) the fleet composition, and (iii) the planning procedure so as to satisfy a desired task profile while trading off global resources such as cost and energy.
To do so in a principled and compositional way, we leverage the monotone theory of co-design~\cite{zardini2023dissertation}.

This section is structured as follows.
\Cref{subsec:background_codesign} summarizes the monotone co-design formalism.
Then, \Cref{subsec:general_fleet_design,subsec:general_planner_design,subsec:general_executor_design,subsec:general_evaluator_design} model the fleet composer, planner, executor, and evaluator as \glspl{acr:mdpi}.
Finally, \Cref{subsec:global_codesign_problem} interconnects these \glspl{acr:mdpi} into the global co-design problem shown in \Cref{fig:general_codesign_model}.

\begin{figure*}[th!]
    \centering
    \setlength{\abovecaptionskip}{0pt}
    \includegraphics[width=.95\linewidth]{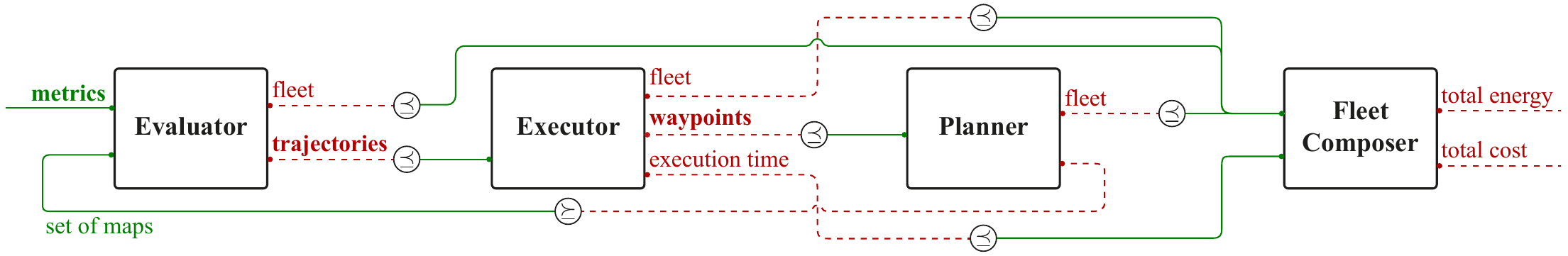}
    \caption{Overview of our general co-design model. Bold wires indicate higher-order collections (i.e., sets of sets).}
    \label{fig:general_codesign_model}
\end{figure*}

\subsection{Background on the Monotone Theory of Co-Design}
\label{subsec:background_codesign}

\paragraph{Formulating Co-Design Problems}
The core modeling primitive of monotone co-design is the \emph{monotone design problem with implementation} (\gls{acr:mdpi}).

\begin{definition}[MDPI]
\label{def:mdpi}
Let \glspl{abk:poset}~$\setOfFunctionalities{}$ and $\setOfResources{}$ denote, respectively, the spaces of \F{functionalities} and \R{resources}.
A \gls{acr:mdpi} is a triple~$d \coloneq \tup{\setOfImplementations{d},\prov,\req}$, where~$\setOfImplementations{d}$ is a set of implementations and~$\prov: \setOfImplementations{d}\to \setOfFunctionalities{}$,~$\req: \setOfImplementations{d}\to \setOfResources{}$ assign to each implementation the functionality it provides and the resources it requires.
We depict this as
\begin{equation*}
        \setOfFunctionalities{} \xleftarrow{\prov} \setOfImplementations{d} \xrightarrow{\req} \setOfResources{},
\end{equation*}
and compactly write~$d\colon \setOfFunctionalities{}\to\setOfResources{}$.
An implementation~$i\in \setOfImplementations{d}$ is \emph{feasible} for a desired functionality $\F{f}\in\setOfFunctionalities{}$ under an available resource budget $\R{r}\in\setOfResources{}$ if
\begin{equation*}
    \prov(i) \succeq_{\setOfFunctionalities{}} \F{f}
    \quad\text{and}\quad
    \req(i) \preceq_{\setOfResources{}} \R{r}.
\end{equation*}
Equivalently, each \gls{acr:mdpi} induces the feasibility map
\begin{equation*}
    \begin{aligned}
        \bar{d}\colon \setOfFunctionalitiesOp{} \times \setOfResources{} &\to \tup{\powerset{\setOfImplementations{d}},\subseteq},\\
        \tup{\F{f}^*,\R{r}} &\mapsto
        \bigl\{i \in \setOfImplementations{d} \mid
        (\prov(i) \succeq_{\setOfFunctionalities{}}\F{f}) \wedge (\req(i)\preceq_{\setOfResources{}}\R{r})\bigr\},
    \end{aligned}
\end{equation*}
where $(\cdot)\op$ reverses the order of a \gls{abk:poset}.
\end{definition}

In co-design diagrams, we depict an \gls{acr:mdpi} as a block with green wires on the left (\F{functionalities}) and red dashed wires on the right (\R{resources}), as illustrated in \Cref{fig:general_codesign_model}.

\begin{remark}[Monotonicity]
\label{rem:mdpi_monotonicity}
The map~$\bar d$ is monotone in both arguments.
Indeed, if fewer \F{functionalities} are demanded (i.e.,~$\F{f'}\preceq \F{f}$), then every implementation that satisfies~$\F{f}$ also satisfies $\F{f'}$, and the feasible implementation set can only grow:~$\bar d(\F{f'}^*,\R r) \supseteq \bar d(\F f^*,\R r)$.
Similarly, if more \R{resources} are available (i.e.,~$\R{r'}\succeq \R{r}$), then any implementation feasible under~$\R r$ remains feasible under~$\R{r'}$, so~$\bar d(\F f^*,\R{r'}) \supseteq \bar d(\F f^*,\R r)$.
\end{remark}

\begin{remark}[Populating the models]
\label{rem:populating_models}
The co-design framework is agnostic to how one specifies the implementation set~$\setOfImplementations{d}$.
In practice, implementations can be populated via analytic relations (e.g., cost models), numerical computation (e.g., solving optimal control problems), or in a data-driven or on-demand fashion (e.g., via simulations or by solving instances of planning problems).
For concrete examples in mobility and autonomy systems, see~\cite{zardini2022codesignmobility, zardini2023dissertation, milojevic2025codei, neumann2024codesignf1}.
\end{remark}

A core tenet of the monotone theory of co-design is that \glspl{acr:mdpi} can be \emph{composed} in series, in parallel, and in feedback (loop) to build larger systems (i.e., a multigraph of \glspl{acr:mdpi}).
In series composition, the functionality provided by one \gls{acr:mdpi} is used as a resource required by another.
In parallel composition, two \glspl{acr:mdpi} operate independently and their functionalities and resources are aggregated.
In feedback (loop) composition, a functionality output is connected to a resource input to enforce internal consistency constraints.
Interconnections are expressed through the posetal order: if a wire connects a provided functionality $f$ to a required resource $r$, feasibility is enforced by the constraint $r \preceq f$.
Intuitively, a downstream module cannot require more than the upstream module provides.\footnote{Formally, the co-design primitives endow the category of design problems with a traced monoidal structure (locally posetal), which guarantees well-posedness of feedback compositions; see~\cite{zardini2023dissertation} for details.}
Crucially, the class of \glspl{acr:mdpi} is closed under these compositions, so the resulting interconnected multigraph is itself a valid \gls{acr:mdpi}~\cite{zardini2023dissertation}.

\paragraph{Solving Co-Design Problems}
Given a \gls{acr:mdpi}, one can query it in two dual ways.
A \texttt{FixFunMinRes} query fixes desired \F{functionalities} and returns the Pareto-optimal (i.e., minimal) \R{resources} required to provide them.
A \texttt{FixResMaxFun} query fixes available \R{resources} and returns the Pareto-optimal (i.e., maximal) \F{functionalities} achievable under that budget.

\begin{definition}
\label{def:h_map}
Given a \gls{acr:mdpi}~$d$, one defines monotone maps
\begin{itemize}
    \item $h_d\colon \setOfFunctionalities{}\to \mathsf{A}\setOfResources{}$, mapping a functionality to the \emph{minimum} antichain of resources providing it;
    \item $h_d'\colon \setOfResources{}\to \mathsf{A}\setOfFunctionalities{}$, mapping a resource budget to the \emph{maximum} antichain of functionalities provided by it.
\end{itemize}
\end{definition}

Computing these maps for composite problems with feedback can be reduced to fixed-point computations.
Under standard assumptions (complete posets and Scott-continuity), one can apply Kleene's fixed point theorem to obtain an algorithm that converges to the set of optimal solutions or certifies infeasibility~\cite{censi2015mathematical, zardini2023dissertation}.
Importantly, this fixed-point-based approach avoids brute-force enumeration of all design combinations: its computational cost scales linearly with the number of implementations available in each constituent block (up to antichain operations), rather than combinatorially with the number of design decisions~\cite{censi2015mathematical,zardini2023dissertation}.

\subsection{The Fleet Design Problem}
\label{subsec:general_fleet_design}
We now model the fleet design layer from \Cref{sec:problem_formulation} within monotone co-design.
At this layer, two decisions are tightly coupled: (i) selecting the design of each robot, and (ii) selecting how many robots are included in the fleet.
We represent this with a composite \gls{acr:mdpi} called the \textbf{Fleet Composer}.
It provides the \F{fleet} and associated \F{execution-time budgets} as functionalities, while requiring the \R{total resources} needed to realize and operate that fleet.
Internally, this is constructed by interconnecting multiple \textbf{Robot} \glspl{acr:mdpi}, one per robot ``slot'' (see \Cref{fig:general_fleet_composer_detailed}).

\subsubsection{Robot MDPI}
We begin with the \textbf{Robot} \gls{acr:mdpi} shown in \Cref{fig:general_codesign_robot_mcdp}.
The underlying robot design problem selects hardware and low-level control components (e.g., battery, actuation, sensors).
For system-level co-design, we expose only the quantities that couple robot design to planning, execution, and fleet-level resource trade-offs.

\begin{assumption}[Robot reduction]
\label{ass:robot_modeling_determination}
For the purposes of system-level comparison and co-design, each robot can be uniquely characterized at the interface level by the tuple~$\tup{\texttt{cost},\; \texttt{power},\; \texttt{capacity},\; \dprop,\; \sensing}$.
That is, all robot implementation details influence (i) planning and execution only through $\dprop$ and $\sensing$, and (ii) fleet-level resource trade-offs through acquisition \texttt{cost}, operational \texttt{power}, and available energy \texttt{capacity}.
\end{assumption}

\begin{remark}[Choice of exposed robot properties]
\label{rem:exposed_robot_properties}
Recall that in \Cref{def:robot_resources} resources are introduced abstractly.
\Cref{ass:robot_modeling_determination} does not restrict generality; it simply specifies the interface variables used in this paper.
Other applications may expose additional properties (e.g., mass, payload, compute) or hide some of the above by absorbing them into other blocks.
\end{remark}

Accordingly, the Robot \gls{acr:mdpi} provides as \F{functionalities} the dynamical properties (\Cref{def:robot_dyn_prop}), sensing capabilities (\Cref{def:robot_sens_prop}), and available energy capacity (in Wh).
It requires as \R{resources} the acquisition cost (USD) and operational power (W).
To compare designs, we require partial orders on $\dprop$ and $\sensing$.

\begin{definition}[Poset of dynamical properties and sensing capabilities]
\label{def:poset_dyn_prop_sens_cap}
    The \F{dynamical properties} and \F{sensing capabilities} are ordered according to the componentwise order, i.e.:
\begin{equation*}
    \dprop_1 \preceq \dprop_2 
    \iff d_{1,i} \preceq_{D_i} d_{2,i} \quad \forall i = 1,\ldots,w.,
\end{equation*}
\begin{equation*}
    \sensing_1 \preceq \sensing_2 
    \iff s_{1,j} \preceq_{S_j} s_{2,j} \quad \forall j = 1,\ldots,v.
\end{equation*}
\end{definition}

\begin{assumption}[Least elements]
\label{ass:least_element}
The \glspl{abk:poset} of dynamical and sensing properties admit least elements~$\bot_{\dprop}$ and~$\bot_{\sensing}$, respectively, such that~$\bot_{\dprop}\preceq \dprop$ and~$\bot_{\sensing}\preceq \sensing$ for all~$\dprop$ and~$\sensing$.
\end{assumption}

\begin{figure}[tb]
    \centering
    \setlength{\abovecaptionskip}{0pt}
    \includegraphics[width=0.7\columnwidth]{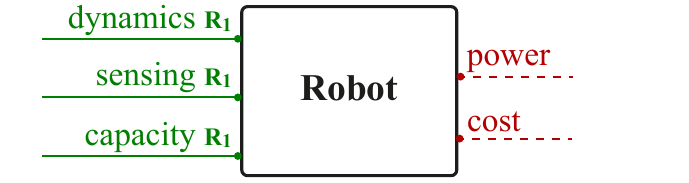}
    \caption{The \textbf{Robot} \gls{acr:mdpi}.}
    \label{fig:general_codesign_robot_mcdp}
\end{figure}

\begin{definition}[Null robot]
\label{def:null_robot}
A \emph{null robot} $\robot_\bot$ is a robot that provides the least dynamical and sensing tuples $(\bot_{\dprop},\bot_{\sensing})$, has zero energy capacity, and requires zero resources (zero cost and zero power).
Its associated execution-time budget is $t^{\mathrm{exec}}(\robot_\bot)=0$.
\end{definition}

\begin{lemma}[Robot MDPI]
\label[lemma]{lem:robot_mdpi_monotone}
The Robot \gls{acr:mdpi} is valid.
\end{lemma}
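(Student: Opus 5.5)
We read ``valid'' in the sense of \Cref{def:mdpi}: we must exhibit \glspl{abk:poset} of \F{functionalities} and \R{resources}, an implementation set $\setOfImplementations{\robot}$, and maps $\prov$ and $\req$ into these posets. Once these data are in place, the monotonicity of the feasibility map $\bar d$ follows from \Cref{rem:mdpi_monotonicity}. The plan is therefore to construct each ingredient and check that it is well-typed.

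First, we fix the functionality poset as the product
\[
\setOfFunctionalities{\robot} \coloneq \mathcal{D}\times\mathcal{S}\times\tup{\mathbb{R}_{\ge 0},\le},
\]
whose components are the dynamical properties, the sensing capabilities, and the energy capacity in Wh. Here $\mathcal{D}$ and $\mathcal{S}$ carry the componentwise order of \Cref{def:poset_dyn_prop_sens_cap}. We verify that this is a partial order: reflexivity, transitivity and antisymmetry hold coordinatewise because each $\tup{D_i,\preceq_{D_i}}$ and $\tup{S_j,\preceq_{S_j}}$ is a \gls{abk:poset} by \Cref{def:robot_dyn_prop,def:robot_sens_prop}, and a finite product of posets is a poset. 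We then take the resource poset $\setOfResources{\robot}\coloneq\tup{\mathbb{R}_{\ge 0},\le}\times\tup{\mathbb{R}_{\ge 0},\le}$ for cost (USD) and power (W). This is an instance of \Cref{def:robot_resources} with $k=2$.

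Second, we define $\setOfImplementations{\robot}$ as the set of concrete robot designs, namely choices of hardware and low-level control components, together with the null robot of \Cref{def:null_robot}. By \Cref{ass:robot_modeling_determination}, every implementation $i$ determines a unique tuple $\tup{\texttt{cost},\texttt{power},\texttt{capacity},\dprop,\sensing}$. We therefore set
\[
\prov(i)\coloneq\tup{\dprop(i),\sensing(i),\texttt{capacity}(i)}
\quad\text{and}\quad
\req(i)\coloneq\tup{\texttt{cost}(i),\texttt{power}(i)}.
\]
These maps are well-defined functions exactly because the assumption gives a unique characterization. The null robot maps to $\tup{\bot_{\dprop},\bot_{\sensing},0}$ and $\tup{0,0}$. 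Both values lie in the respective posets by \Cref{ass:least_element}, and they are respectively the least functionality and the least resource.

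Third, we invoke \Cref{rem:mdpi_monotonicity} to conclude that the induced map $\bar d\colon \setOfFunctionalitiesOp{\robot}\times\setOfResources{\robot}\to\tup{\powerset{\setOfImplementations{\robot}},\subseteq}$ is monotone. The argument uses only transitivity of the orders. If $\F{f'}\preceq\F{f}\preceq\prov(i)$, then $\prov(i)\succeq\F{f'}$. Likewise, if $\req(i)\preceq\R r\preceq\R{r'}$, then $\req(i)\preceq\R{r'}$. We also note that the null robot keeps $\bar d(\tup{\bot_{\dprop},\bot_{\sensing},0}^*,\R r)$ nonempty for every $\R r$, which is useful later when robot slots are left empty in the fleet composer.

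The main obstacle is the well-definedness of $\prov$ and $\req$, and in particular the heterogeneous categorical coordinate $d_{\mathrm{type}}$ of \Cref{rem:robot_heterogeneity} together with the ``null'' values. We must make sure that $D_{\mathrm{type}}$ is given a partial order; the discrete order suffices. We must also place the null parameter values so that $\bot_{\dprop}$ actually exists as \Cref{ass:least_element} requires, for instance by adjoining a bottom element to each $D_i$. With these choices the product remains a poset, and the rest of the argument goes through as above.
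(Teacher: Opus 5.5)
Your proposal is correct and rests on the same step as the paper: an implementation with $\req(i)\preceq\R{r}\preceq\R{r'}$ stays feasible under the larger budget. This is exactly the paper's argument that a design feasible under $(\text{cost}_1,\text{power}_1)$ remains feasible under $(\text{cost}_2,\text{power}_2)$ by reusing the same components. You go further than the paper by writing out the triple $\tup{\setOfImplementations{\robot},\prov,\req}$ and its posets, and by also treating the functionality argument of $\bar d$ and the null robot, which the paper leaves implicit.
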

\begin{proof}
Consider two resource assignments~$\tup{\R{\text{cost}}_1, \R{\text{power}}_1}$ and $\tup{\R{\text{cost}}_2, \R{\text{power}}_2}$ such that \R{cost}$_\text{1}$ $\le$ \R{cost}$_\text{2}$ and \R{power}$_\text{1}$ $\le$ \R{power}$_\text{2}$. 
Any robot design that is feasible under the smaller resource budget (\R{cost}$_\text{1}$, \R{power}$_\text{1}$) remains feasible under the larger budget (\R{cost}$_\text{2}$, \R{power}$_\text{2}$) by selecting the same component implementations and simply not utilizing the additional available resources. 
Consequently, increasing the available cost or power cannot reduce the achievable dynamical properties, sensing capabilities, or energy capacity of the robot. 
\end{proof}

\subsubsection{Fleet Composer MDPI}
We now turn to the \textbf{Fleet Composer} \gls{acr:mdpi} shown in \Cref{fig:general_fleet_composer}.
It interconnects~$n$ Robot \glspl{acr:mdpi} to synthesize a fleet.
The Fleet Composer provides as \F{functionalities} (i) the fleet and (ii) an execution-time budget for each robot.
It requires as \R{resources} the total cost (USD) and total energy (Wh) associated with realizing and operating that fleet.

\begin{figure}[tb]
    \centering
    \setlength{\abovecaptionskip}{0pt}
    \includegraphics[width=0.7\columnwidth]{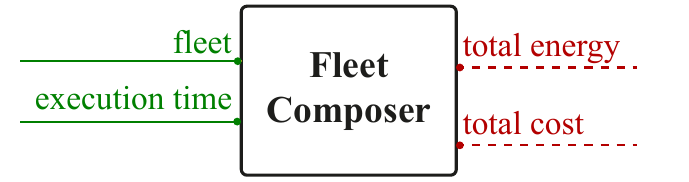}
    \caption{The \textbf{Fleet Composer} \gls{acr:mdpi}.}
    \label{fig:general_fleet_composer}
\end{figure}

The key modeling question at this level is how to represent fleets in a way that supports comparison and is compatible with co-design composition.
We expose to downstream planning and execution only \emph{effective capabilities}, namely the dynamical and sensing tuples of each robot, together with an execution-time budget indicating how long these capabilities can be sustained.

\paragraph{Fleet Representation and Order}
We represent a fleet as a finite collection of capability pairs
\begin{equation*}
\fleet = \tup{\tup{\dprop_1,\sensing_1},\ldots,\tup{\dprop_n,\sensing_n}},
\end{equation*}
where each pair~$\tup{\dprop_i,\sensing_i}$ describes the effective capabilities of robot~$\robot_i$ as exposed to planning and execution.
The corresponding execution-time budgets are collected in~$\tbudvec = \tup{\tbud_1,\ldots,\tbud_n}\in \mathbb{R}_{\ge 0}^n.$
We order execution-time vectors componentwise using the standard order on~$\mathbb{R}_{\ge 0}$.

\begin{definition}[Partial order on fleets]
\label{def:partial_order_fleets}
Let
\begin{equation*}
\fleet_1 = \tup{\tup{\dprop_{1,1},\sensing_{1,1}},\ldots,\tup{\dprop_{1,n},\sensing_{1,n}}}
\end{equation*}
and
\begin{equation*}
\fleet_2 = \tup{\tup{\dprop_{2,1},\sensing_{2,1}},\ldots,\tup{\dprop_{2,m},\sensing_{2,m}}}
\end{equation*}
be two fleets (possibly of different cardinalities).
We write~$\fleet_1 \preceq_F \fleet_2$ if and only if there exists an injective map
\begin{equation*}
    \varphi : \{1,\ldots,n\} \hookrightarrow \{1,\ldots,m\}
\end{equation*}
such that
\begin{equation*}
    \dprop_{1,i} \preceq \dprop_{2,\varphi(i)}
    \quad\text{and}\quad
    \sensing_{1,i} \preceq \sensing_{2,\varphi(i)}
    \qquad \forall i\in\{1,\ldots,n\}.
\end{equation*}
In words, every robot in~$\fleet_1$ is dominated (in dynamical and sensing capabilities) by a \emph{distinct} robot in~$\fleet_2$.
\end{definition}

\begin{remark}[Unlabeled robots and antisymmetry]
\label{rem:fleets_unlabeled}
The indices in the tuple representation of~$\fleet$ are bookkeeping labels.
Since robots are interchangeable at the interface level (and may be permuted without changing system behavior), we identify fleets that differ only by permutation of robot indices.
Under this identification, the relation~$\preceq_F$ in \Cref{def:partial_order_fleets} is a partial order.
\end{remark}

\paragraph{Fleet Composer Interconnection}
The detailed Fleet Composer structure is shown in \Cref{fig:general_fleet_composer_detailed}.
Each robot provides its power requirement and energy capacity.
The Fleet Composer multiplies each robot's power by its execution-time budget to obtain the energy required to operate that robot for the allotted time.
A feedback constraint enforces feasibility by requiring the provided capacity to dominate the required energy.
Summing acquisition costs yields total cost; summing operating energies yields total energy.

\begin{figure}[tb]
    \centering
    \setlength{\abovecaptionskip}{0pt}
    \includegraphics[width=1\linewidth]{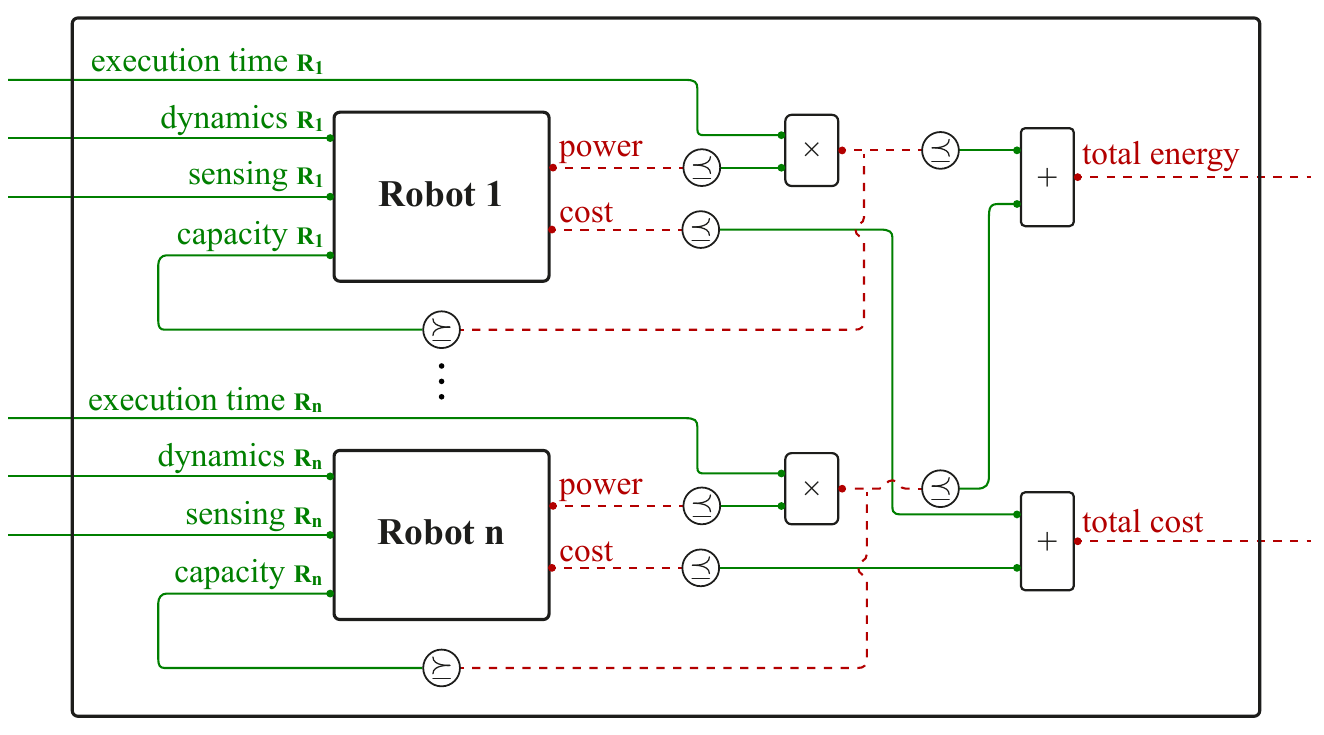}
    \caption{Interconnection of multiple \textbf{Robot} \gls{acr:mdpi}s to form the \textbf{Fleet Composer} \gls{acr:mdpi}.}
    \label{fig:general_fleet_composer_detailed}
\end{figure}

This construction makes explicit the feedback between robot-level energy capacity choices and downstream execution requirements.
Planning and execution determine how long each robot must operate, which fixes the required execution-time budgets and hence the required energy capacity at the robot level.

\begin{lemma}[Fleet Composer MDPI]
\label[lemma]{lem:fleet_composer_monotone}
The Fleet Composer \gls{acr:mdpi} is valid.
\end{lemma}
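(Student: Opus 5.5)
We will show that the Fleet Composer is a valid \gls{acr:mdpi}. There are two routes, and we will use both. The structural route relies on closure of \glspl{acr:mdpi} under series, parallel, and feedback composition. The direct route checks monotonicity of the feasibility map $\bar d$, as in \Cref{rem:mdpi_monotonicity}.

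\textbf{Structural route.} First, decompose the diagram of \Cref{fig:general_fleet_composer_detailed} into primitive blocks:
\begin{itemize}
\item the $n$ Robot \glspl{acr:mdpi}, which are valid by \Cref{lem:robot_mdpi_monotone};
\item the multiplication blocks $(\R{\text{power}}_i,\tbud_i)\mapsto \R{\text{power}}_i\tbud_i$ on $\mathbb{R}_{\ge0}$;
\item the summation blocks producing total cost and total energy;
\item the feedback constraint requiring capacity $\succeq$ required energy;
\item the aggregation block that packages the capability pairs $(\dprop_i,\sensing_i)$ into a fleet $\fleet$.
\end{itemize}
Multiplication and addition on $\mathbb{R}_{\ge0}$ are monotone in each argument, so they define valid \glspl{acr:mdpi} whose implementations are the graph of the map. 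The feedback wire is a standard loop composition. Closure then yields validity of the whole interconnection, provided the aggregation block is itself monotone with respect to $\preceq_F$ from \Cref{def:partial_order_fleets}.

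\textbf{Direct route.} As a cross-check, we will show:
\begin{enumerate}
\item \emph{Resource monotonicity.} Suppose an implementation (robot designs $i_1,\dots,i_n$ with budgets $\tbudvec$) is feasible for demanded $(\fleet,\tbudvec)$ under a budget $(\R{\text{cost}},\R{\text{energy}})$. Then it remains feasible under any larger budget, since the defining inequalities $\sum_i \text{cost}_i\le \R{\text{cost}}$ and $\sum_i\text{power}_i\tbud_i\le \R{\text{energy}}$ only loosen.
\item \emph{Functionality monotonicity.} If the demand is lowered to $(\fleet',\tbudvec')\preceq(\fleet,\tbudvec)$, the same implementation still provides it.
\end{enumerate}

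The step needing care is the fleet order, and it is the main obstacle. The order $\preceq_F$ allows $\fleet'$ to have fewer robots than $\fleet$, and then the provided tuple does not literally match. We handle this as follows:
\begin{itemize}
\item Use the injection $\varphi$ from $\fleet'$ into $\fleet$. Robots outside the image of $\varphi$ are replaced by the null robot $\robot_\bot$ of \Cref{def:null_robot}, which is well defined by \Cref{ass:least_element} and has zero cost, zero power, zero capacity, and zero budget.
\item Alternatively, keep those robots and note that they only dominate more. The key point is that we must never need more resources, and inserting null robots ensures this, since it cannot increase the cost or energy sums.
\item Under the permutation identification of \Cref{rem:fleets_unlabeled}, reorder slots so that $\varphi$ is the identity on the matched slots. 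Componentwise domination of $(\dprop_i,\sensing_i)$ then follows from the definition.
\end{itemize}

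It remains to check the lowered time budgets $\tbud'_i\le\tbud_i$. Keeping the robot $i_k$ in slot $k$, the required energy $\text{power}_k\tbud'_k\le\text{power}_k\tbud_k\le\text{capacity}_k$. So the feedback constraint still holds and the energy sum does not increase.

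Together with resource monotonicity, this shows that $\bar d$ is monotone in both arguments, and hence that the Fleet Composer is a valid \gls{acr:mdpi}.
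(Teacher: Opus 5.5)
Your proposal is correct, and its structural route is exactly the paper's proof. The paper also treats the Fleet Composer as Robot MDPIs joined by series/parallel wiring and one feedback loop, notes that the interconnecting addition and multiplication on $\mathbb{R}_{\ge 0}$ are monotone, and invokes closure of MDPIs under these compositions. You are slightly more careful in flagging that the block packaging the pairs $(\dprop_k,\sensing_k)$ into a fleet must respect $\preceq_F$. The identity injection in \Cref{def:partial_order_fleets} gives this at once, and the paper leaves it implicit.

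Your direct route has no counterpart in the paper's proof of this lemma; it resembles the argument for \Cref{lem:robot_mdpi_monotone}. It buys independence from the closure theorem. However, by \Cref{rem:mdpi_monotonicity}, monotonicity of $\bar d$ holds for any triple $(\setOfImplementations{d},\prov,\req)$ over posets, so all it really needs is transitivity of $\preceq_F$. That also dissolves your ``main obstacle'': feasibility asks for $\prov(i)\succeq(\fleet',\tbudvec')$, not a literal match. The chain $(\fleet',\tbudvec')\preceq(\fleet,\tbudvec)\preceq\prov(i)$ follows by composing the two injections.

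The null-robot surgery is therefore unnecessary, and as written it can even fail. The order on $\tbudvec$ is componentwise and independent of $\varphi$, so giving an unmatched slot $k$ budget $0$ violates $\tbud'_k\le\tbud_k$ whenever $\tbud'_k>0$. Likewise, permuting slots on the implementation side changes which budgets get compared. Your ``keep those robots'' alternative is the version that goes through.
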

\begin{proof}
The Fleet Composer is obtained by composing Robot \glspl{acr:mdpi} through series/parallel connections and a feedback loop.
All intermediate arithmetic operations used in the interconnection (addition and multiplication on~$\mathbb{R}_{\ge 0}$) are monotone maps.
Therefore, by closure of \glspl{acr:mdpi} under these compositions~\cite{zardini2023dissertation}, the resulting Fleet Composer is a valid \gls{acr:mdpi} and is monotone in its resource inputs.
\end{proof}

\begin{remark}[Fixed-length representation via null robots]
\label{rem:fixed_fleet_length}
In \Cref{fig:general_fleet_composer_detailed}, the Fleet Composer is depicted with a fixed number~$n$ of robot slots.
This does not imply that every realized fleet contains~$n$ active robots.
Unused slots can be filled with the null robot~$\robot_\bot$ from \Cref{def:null_robot}, which has zero cost and power, and provides zero capacity as well as $\tup{\bot_{\dprop},\bot_{\sensing}}$ and~$t^{\mathrm{exec}}(\robot_\bot)=0$.
Such slots do not affect total resources and are interpreted as absent robots.
Consequently,~$n$ specifies only the \emph{maximum} fleet size supported by the diagram.
\end{remark}

\subsection{The Planner Design Problem}
\label{subsec:general_planner_design}
We next describe the \textbf{Planner} \gls{acr:mdpi} shown in \Cref{fig:general_codesign_planner}.
This design problem abstracts the choice of planning software that, given a fleet and an environment, generates waypoint assignments (\Cref{def:planner}).

\begin{figure}[tb]
    \centering
    \setlength{\abovecaptionskip}{0pt}
    \includegraphics[width=0.7\columnwidth]{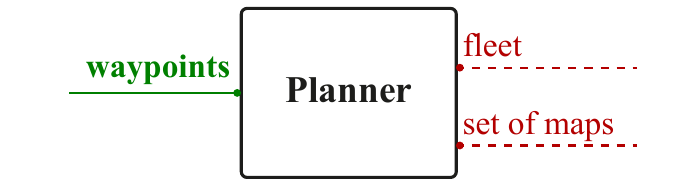}
    \caption{The \textbf{Planner} \gls{acr:mdpi}.}
    \label{fig:general_codesign_planner}
\end{figure}

Within co-design, the Planner \gls{acr:mdpi} \R{requires} a fleet and a set of maps and \F{provides} a set of waypoint collections.
Intuitively, providing the planner with a more capable fleet or a more permissive environment cannot reduce the set of waypoint options available.

\paragraph{Waypoint Collections}
To reason about planning outputs within co-design, we order waypoint collections by an injective subsequence relation that preserves waypoint order.
This captures the idea that a planner that can generate ``more'' waypoint options (e.g., by covering larger regions or by offering additional waypoints that can be ignored downstream) provides a larger functionality.

\begin{definition}[Poset of waypoint collections]
\label{def:poset_collection_waypoints}
Let
\begin{equation*}
\collwpoints^{(1)} = \prod_{i=1}^{n} \wpoints^{(1)}_i
\quad\text{and}\quad
\collwpoints^{(2)} = \prod_{j=1}^{m} \wpoints^{(2)}_j
\end{equation*}
be two waypoint collections (with possibly different cardinalities), where each $\wpoints_i = (w_{i,1},\dots,w_{i,p_i})$ is an ordered waypoint sequence.
We define a partial order~$\preceq_{\collwpoints}$ by declaring that $\collwpoints^{(1)} \preceq_{\collwpoints} \collwpoints^{(2)}$
if and only if there exists an injective map $\varphi : \{1,\ldots,n\}\hookrightarrow \{1,\ldots,m\}$ such that, for every \(i\in\{1,\ldots,n\}\), the sequence \(\wpoints_i^{(1)}\) is an order-preserving subsequence of \(\wpoints_{\varphi(i)}^{(2)}\), i.e., there exists a strictly increasing map
\begin{equation*}
\sigma_i : \{1,\ldots,p_i^{(1)}\}\to\{1,\ldots,p_{\varphi(i)}^{(2)}\}
\end{equation*}
satisfying
\begin{equation*}
w^{(1)}_{i,k} = w^{(2)}_{\varphi(i),\,\sigma_i(k)}
\qquad
\forall k\in\{1,\ldots,p_i^{(1)}\}.
\end{equation*}
\end{definition}

The subsequence order in \Cref{def:poset_collection_waypoints} assumes that the executor abstraction is compatible with waypoint refinement: inserting additional intermediate waypoints into a waypoint sequence cannot invalidate the feasibility of executing the original sequence, since the executor may realize the same motion by traversing the matched subsequence and ignoring additional inserted waypoints. 
This is an abstraction choice used to preserve monotonicity of the planner–executor interconnection.

\paragraph{Sets of Maps}
Recall from \Cref{def:setofmaps} that a set of maps is a finite set~$\setofmaps=\{\map_1,\ldots,\map_p\}$.
We order sets of maps by injective inclusion.

\begin{definition}[Poset of sets of maps]
\label{def:poset_set_maps}
Let~$\setofmaps^{(1)} = \{ \map_{1,1},\ldots,\map_{1,n} \}$ and~$\setofmaps^{(2)} = \{ \map_{2,1},\ldots,\map_{2,m} \}$ be two sets of maps.
We write~$\setofmaps^{(1)} \preceq_{\setofmaps} \setofmaps^{(2)}$ if and only if there exists an injective map~$\varphi : \{1,\ldots,n\}\hookrightarrow\{1,\ldots,m\}$ such that
\begin{equation*}
    \map_{1,i} \subseteq \map_{2,\varphi(i)}
    \qquad \forall i\in\{1,\ldots,n\}.
\end{equation*}
\end{definition}

\paragraph{Sets of Waypoint Collections}
For each map in a set of maps, the planner provides a corresponding waypoint collection.

\begin{definition}[Poset of sets of waypoint collections]
\label{def:poset_set_waypoints}
Let~$\setcollwpoints^{(1)} = \{ \collwpoints_{1,1},\ldots,\collwpoints_{1,n} \}$ and~$\setcollwpoints^{(2)} = \{ \collwpoints_{2,1},\ldots,\collwpoints_{2,m} \}$ be two sets of waypoint collections.
We say~$\setcollwpoints^{(1)} \preceq_{\setcollwpoints} \setcollwpoints^{(2)}$ if and only if there exists an injective map~$\varphi : \{1,\ldots,n\}\hookrightarrow\{1,\ldots,m\}$ such that
\begin{equation*}
    \collwpoints_{1,i} \preceq_{\collwpoints} \collwpoints_{2,\varphi(i)}
    \qquad \forall i\in\{1,\ldots,n\},
\end{equation*}
where $\preceq_{\collwpoints}$ is the order from \Cref{def:poset_collection_waypoints}.
\end{definition}

\begin{lemma}[Planner MDPI]
\label[lemma]{lem:planner_monotonicity}
The Planner \gls{acr:mdpi} is valid.
\end{lemma}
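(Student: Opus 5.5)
To prove \Cref{lem:planner_monotonicity}, I would verify the two ingredients that make the Planner a valid \gls{acr:mdpi} in the sense of \Cref{def:mdpi}. First, the functionality and resource spaces must be genuine posets. Second, the induced feasibility map $\bar d$ must be monotone, i.e., more resources (a larger fleet or larger set of maps) cannot shrink the set of feasible implementations for a given demanded functionality. The implementations are pairs $i=(\planner,\fleet,\setofmaps)$: a chosen planning algorithm together with the fleet and maps it is run on. Here $\req(i)=(\fleet,\setofmaps)$ and $\prov(i)=\setcollwpoints=\{\planner(\fleet,\map_j)\}_j$ as in \Cref{def:set_waypoint_collections}. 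With this structure, monotonicity of $\bar d$ follows verbatim from \Cref{rem:mdpi_monotonicity}. So the substantive content is the poset claims, plus the modeling claim that upgrading resources can always reproduce a previously available functionality.

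\textbf{Step 1: posets.} For $\preceq_F$ I would invoke \Cref{rem:fleets_unlabeled}. For $\preceq_{\collwpoints}$, $\preceq_{\setofmaps}$ and $\preceq_{\setcollwpoints}$ I would use the common pattern of an injective matching with a pointwise relation.
\begin{itemize}
\item \emph{Reflexivity:} take the identity matching together with pointwise reflexivity.
\item \emph{Transitivity:} compose the injections $\varphi_2\circ\varphi_1$, and the strictly increasing maps $\sigma$, using pointwise transitivity of subsequence and of set inclusion.
\end{itemize}

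\textbf{Antisymmetry is the main obstacle.} If $\collwpoints^{(1)}\preceq\collwpoints^{(2)}\preceq\collwpoints^{(1)}$, the two injections force $n=m$. Composing them gives a permutation $\pi$ of a finite index set along which each element is dominated by the next. Iterating $\pi$ returns to the start after finitely many steps. Antisymmetry of the pointwise relation (equal lengths force equal subsequences; mutual inclusion forces equal maps) then makes all elements on each cycle equal. So the two objects agree up to reindexing. As in \Cref{rem:fleets_unlabeled}, I would identify collections and sets differing by a permutation of indices, which yields antisymmetry. For sets of waypoint collections, the same cycle argument applies one level up, using antisymmetry of $\preceq_{\collwpoints}$ just established.

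\textbf{Step 2: feasibility under more resources.} Suppose $(\fleet_1,\setofmaps_1)\preceq(\fleet_2,\setofmaps_2)$ and some implementation with resources $(\fleet_1,\setofmaps_1)$ provides $\setcollwpoints_1\succeq \F{f}$. I would argue that the same implementation remains feasible under the budget $(\fleet_2,\setofmaps_2)$, since its requirement is dominated by that budget. This is exactly the argument used in \Cref{lem:robot_mdpi_monotone}, where extra resources are simply not used. Concretely, one may also exhibit an implementation using the larger fleet and maps that provides a dominating set of waypoint collections:
\begin{itemize}
\item assign the original waypoint sequences to the matched robots $\varphi(i)$;
\item give null-like empty sequences to unmatched robots, in the spirit of \Cref{rem:fixed_fleet_length};
\item use the matched larger maps, noting that waypoints in $\map_{1,i}\subseteq\map_{2,\varphi(i)}$ remain valid.
\end{itemize}
This dominates under \Cref{def:poset_set_waypoints} via the induced injections. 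Either way $\bar d(\F f^*,\R r)$ grows with $\R r$. Combined with the functionality direction from \Cref{rem:mdpi_monotonicity}, this gives monotonicity of $\bar d$, hence validity.
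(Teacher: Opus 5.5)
Your proof is correct, but it is organized differently from the paper's. The paper's proof consists only of what you put in the ``concretely'' part of Step 2. It treats the fleet and the set of maps as two separate resource dimensions. For the fleet, it argues that any waypoint collection feasible for $\fleet_1$ remains feasible for a dominating $\fleet_2$, by handing the same sequences to the matched dominating robots and arbitrary sequences to any extra robots. For the maps, it notes that waypoints lying in $\map_{1,i}$ also lie in $\map_{2,\varphi(i)}$.

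The paper never fixes an implementation set. It also never observes that monotonicity of $\bar d$ is automatic once $\prov$ and $\req$ are maps into posets, and it takes the partial-order claims for the injective-matching orders for granted.

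You instead make the formal content explicit. By choosing implementations $(\planner,\fleet,\setofmaps)$, you reduce validity to two things: the poset axioms, and the trivial argument that the same implementation stays feasible under a larger budget. You then supply the missing antisymmetry proof through the permutation-cycle argument.

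This buys rigor on a point the paper glosses over. On raw tuples, $\preceq_{\collwpoints}$ and $\preceq_{\setcollwpoints}$ are only preorders: swapping two robots' sequences gives mutually dominating but unequal tuples. Exactly like $\preceq_F$, they must be taken modulo reindexing to be antisymmetric, yet the paper states this identification only for fleets.

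The paper's route, by contrast, conveys the modeling intuition (a dominating robot can replay the same plan, and submap waypoints remain valid) that makes the Planner block a sensible monotone model. In your formalization that intuition becomes an optional corollary rather than the crux.
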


\begin{proof}
We consider the two resource dimensions separately.

\textbf{Fleet monotonicity.}
Let~$\fleet_1 \preceq_F \fleet_2$ and fix a set of maps~$\setofmaps$.
By \Cref{def:partial_order_fleets}, every robot in~~$\fleet_1$ is dominated by a distinct robot in~$\fleet_2$.
Any waypoint collection feasible for~$\fleet_1$ on a given map remains feasible for~$\fleet_2$ by assigning the same waypoint sequences to the corresponding dominating robots (and assigning arbitrary waypoint sequences to any additional robots).
Thus, the set of waypoint collections is monotone in the fleet.

\textbf{Map-set monotonicity.}
Let~$\setofmaps^{(1)} \preceq_{\setofmaps} \setofmaps^{(2)}$.
By \Cref{def:poset_set_maps}, for every~$\map_{1,i}\in \setofmaps^{(1)}$ there exists a distinct~$\map_{2,\varphi(i)}\in \setofmaps^{(2)}$ such that $\map_{1,i}\subseteq \map_{2,\varphi(i)}$.
Any waypoint sequence whose elements lie in~$\map_{1,i}$ also lies in~$\map_{2,\varphi(i)}$, so waypoint collections valid for~$\setofmaps^{(1)}$ remain valid for $\setofmaps^{(2)}$.
Therefore, the output is monotone in the set of maps.
\end{proof}

\subsection{The Executor Design Problem}
\label{subsec:general_executor_design}
The \textbf{Executor} \gls{acr:mdpi} (\Cref{fig:general_codesign_executor}) instantiates the executor interface from \Cref{def:executor}.
It \R{requires} a fleet, a set of waypoint collections, and an execution-time budget vector, and \F{provides} a set of dynamically feasible trajectory collections.

\begin{figure}[tb]
    \centering
    \setlength{\abovecaptionskip}{0pt}
    \includegraphics[width=0.7\columnwidth]{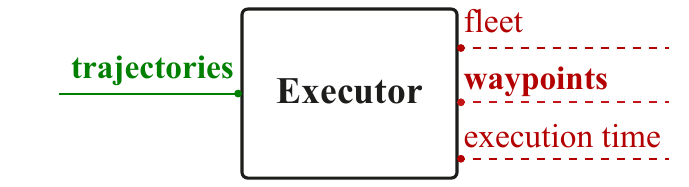}
    \caption{The \textbf{Executor} \gls{acr:mdpi}.}
    \label{fig:general_codesign_executor}
\end{figure}

To use trajectories within co-design, we equip them with a partial order that captures the idea that a trajectory can be safely extended in time (e.g., by waiting) and can be improved by stronger sensing.

\begin{definition}[Poset of trajectories]
\label{def:poset_trajectories}
Let~$\traj_1:[0,T_1]\to\map$ and~$\traj_2:[0,T_2]\to\map$ be two trajectories as in \Cref{def:trajectory}.
We associate with each trajectory a (time-invariant) sensing tuple~$\sensing_{\traj_i}\in \sensing$ describing the sensing capabilities of the robot executing it.
We define a partial order~$\preceq_\traj$ by $\traj_1\preceq_\traj \traj_2$ if and only if
\begin{enumerate}
    \item \textbf{Prefix dominance in time and space:} $T_1\le T_2$ and $\traj_1(t)=\traj_2(t)$ for all $t\in[0,T_1]$;
    \item \textbf{Sensing dominance:} $\sensing_{\traj_1}\preceq \sensing_{\traj_2}$, using the order from \Cref{def:poset_dyn_prop_sens_cap}.
\end{enumerate}
\end{definition}

\begin{definition}[Poset of trajectory collections]
\label{def:poset-trajectory-collections}
Let~$\colltraj_1 = \{\traj^{(1)}_1,\ldots,\traj^{(1)}_n\}$ and~$\colltraj_2 = \{\traj^{(2)}_1,\ldots,\traj^{(2)}_m\}$ be two collections of trajectories.
We write~$\colltraj_1 \preceq_{\colltraj} \colltraj_2$ if and only if there exists an injective map~$\varphi:\{1,\ldots,n\}\hookrightarrow\{1,\ldots,m\}$ such that
\begin{equation*}
    \traj^{(1)}_i \preceq_\traj \traj^{(2)}_{\varphi(i)}
    \qquad \forall i\in\{1,\ldots,n\}.
\end{equation*}
\end{definition}

\begin{definition}[Poset of sets of trajectory collections]
\label{def:poset_set_trajectory_collections}
Let $\setcolltraj^{(1)} = \{ \colltraj_{1,1},\ldots,\colltraj_{1,n} \}$ and $\setcolltraj^{(2)} = \{ \colltraj_{2,1},\ldots,\colltraj_{2,m} \}$ be two sets of trajectory collections.
We define $\setcolltraj^{(1)} \preceq_{\setcolltraj} \setcolltraj^{(2)}$ if and only if there exists an injective map $\varphi : \{1,\ldots,n\}\hookrightarrow \{1,\ldots,m\}$ such that
\begin{equation*}
    \colltraj_{1,i} \preceq_{\colltraj} \colltraj_{2,\varphi(i)}
    \qquad \forall i\in\{1,\ldots,n\},
\end{equation*}
where $\preceq_{\colltraj}$ is from \Cref{def:poset-trajectory-collections}.
\end{definition}

Recall~$\tbudvec = \tup{\tbud_1,\ldots,\tbud_n}\in \mathbb{R}_{\ge 0}^n$, the execution-time budget vector, ordered componentwise.

\begin{lemma}[Executor MDPI]
\label[lemma]{lem:executor_monotonicity}
The Executor \gls{acr:mdpi} is valid.
\end{lemma}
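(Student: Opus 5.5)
We follow the pattern of \Cref{lem:planner_monotonicity} and treat the three resource dimensions of the Executor \gls{acr:mdpi} separately. For each one, we show that enlarging the resource in its poset cannot shrink the set of trajectory-collection sets that can be provided, in the order $\preceq_{\setcolltraj}$ of \Cref{def:poset_set_trajectory_collections}. By \Cref{rem:mdpi_monotonicity}, downward closure in functionality is built into the feasibility map $\bar d$. So the substantive claim is upward monotonicity in resources: any implementation feasible under resources $(\fleet_1,\setcollwpoints^{(1)},\tbudvec_1)$ should remain feasible under any dominating triple $(\fleet_2,\setcollwpoints^{(2)},\tbudvec_2)$. Its provided set of trajectory collections should also be dominated by one achievable there.

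\textbf{Time budgets.} Fix the fleet and waypoints, and let $\tbudvec_1\le\tbudvec_2$ componentwise. Every trajectory $\traj_i$ on $[0,T_i]$ with $T_i\le\tbud_{1,i}$ also satisfies $T_i\le\tbud_{2,i}$, so the same collection is admissible under \Cref{def:executor}. Alternatively, extend each trajectory by idling at $\traj_i(T_i)$ up to a longer horizon, as in the time-budget remark. The idled trajectory is dynamically feasible, and by prefix dominance (\Cref{def:poset_trajectories}) it is $\succeq_\traj$ the original. Hence the output is monotone in $\tbudvec$.

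\textbf{Fleet.} Let $\fleet_1\preceq_F\fleet_2$ via an injection $\varphi$. Robot $\varphi(i)$ of $\fleet_2$ has $\dprop_{2,\varphi(i)}\succeq\dprop_{1,i}$. We use that the parametrized dynamics \eqref{eq:robot_model} are monotone in $\dprop$: more capable bounds (larger $v_{\max}$, $a_{\max}$, etc.) enlarge the admissible motion set. So $\robot_{2,\varphi(i)}$ can reproduce $\traj_i$. Its sensing tuple is larger, and by the sensing-dominance clause the reproduced trajectory is $\succeq_\traj$ the original. Additional robots of $\fleet_2$ receive the trivial zero-length trajectory, and their budgets are handled by the previous step. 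The injection $\varphi$ lifts to trajectory collections and, per map, to sets of collections.

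\textbf{Waypoints.} Let $\setcollwpoints^{(1)}\preceq_{\setcollwpoints}\setcollwpoints^{(2)}$. Composing the injections of \Cref{def:poset_set_waypoints,def:poset_collection_waypoints}, each original waypoint sequence is an order-preserving subsequence of a distinct sequence in the larger set. By the refinement assumption stated after \Cref{def:poset_collection_waypoints}, the executor can realize the same motion by visiting the matched subsequence and ignoring the inserted waypoints. The resulting trajectory collection is therefore unchanged and is reachable under the larger resource. The three cases combine because the product order is componentwise.

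\textbf{Main obstacle.} The fleet step is the one I expect to be hardest. It needs the model-level assumption that $\dprop\preceq\dprop'$ implies that feasible motions under $\dprop$ are feasible under $\dprop'$. This is delicate for categorical components such as $d_{\mathrm{type}}$ or ``null'' values (\Cref{rem:robot_heterogeneity}). I would state it explicitly as the intended semantics of the order in \Cref{def:robot_dyn_prop}, with the categorical coordinate ordered discretely so that only same-type robots are comparable. A second point to check is that the robot index bookkeeping of $\tbudvec$ is compatible with the fleet injection $\varphi$. I would handle this by permuting budget coordinates along $\varphi$, using the identification in \Cref{rem:fleets_unlabeled}.
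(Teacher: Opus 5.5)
Your proposal is correct and follows the paper's proof: the same split into fleet, waypoint, and time-budget monotonicity, with the same three arguments (reassigning identical motions to the matched dominating robots, following the matched subsequence while ignoring inserted waypoints, and reusing or idle-extending a trajectory under a larger budget). Your explicit hypotheses (admissible motions growing with $\dprop$, sensing dominance in $\preceq_\traj$, zero-length trajectories for unmatched robots) make precise what the paper leaves implicit in ``ignoring any additional capabilities''; the one caveat is that re-indexing $\tbudvec$ along $\varphi$ changes the resource rather than holding it fixed, so your fix does not actually resolve the index mismatch you spotted, although the paper's proof passes over that mismatch silently as well.
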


\begin{proof}
We consider the three resource dimensions separately.

\textbf{Fleet monotonicity.}
Let $\fleet_1\preceq_F \fleet_2$ and fix $\tup{\setcollwpoints,\tbudvec}$.
Since every robot in $\fleet_1$ is dominated by a distinct robot in $\fleet_2$ (\Cref{def:partial_order_fleets}), any dynamically feasible trajectory collection producible under $\fleet_1$ remains producible under $\fleet_2$ by assigning identical motions to the matched dominating robots and ignoring any additional capabilities.
Thus, the set of trajectory collections is monotone in the fleet.

\textbf{Waypoint monotonicity.}
Let $\setcollwpoints^{(1)} \preceq_{\setcollwpoints} \setcollwpoints^{(2)}$ and fix $(\fleet,\tbudvec)$.
For each waypoint collection $\collwpoints_{1,i}\in \setcollwpoints^{(1)}$ there exists a distinct $\collwpoints_{2,\varphi(i)}\in \setcollwpoints^{(2)}$ such that $\collwpoints_{1,i}\preceq_{\collwpoints} \collwpoints_{2,\varphi(i)}$. 
That is, by \Cref{def:poset_collection_waypoints}, for each waypoint collection $\collwpoints_{1,i} \in \setcollwpoints^{(1)}$, there exists a distinct $\collwpoints_{2,\varphi(i)} \in \setcollwpoints^{(2)}$ such that each waypoint sequence in $\collwpoints_{1,i}$ is an order-preserving subsequence of the corresponding waypoint sequence in $\collwpoints_{2,\varphi(i)}$.
Therefore, the Executor can realize the same trajectory collection by following the matched subsequence of waypoints while ignoring any additional inserted waypoints permitted by the larger sequence.
Hence, any trajectory collection feasible for $\collwpoints_{1,i}$ remains feasible for $\collwpoints_{2,\varphi(i)}$, yielding $\setcolltraj^{(1)} \preceq_{\setcolltraj} \setcolltraj^{(2)}$.

\textbf{Time budget monotonicity.}
Let $\tbudvec^{(1)}\le \tbudvec^{(2)}$ componentwise and fix $(\fleet,\setcollwpoints)$.
Any returned trajectory $\traj_i$ under $\tbudvec^{(1)}$ satisfies $\mathrm{dom}(\traj_i)=[0,T_i]$ with $T_i\le \tbud^{(1)}_i$ (\Cref{def:executor}).
The same trajectory is therefore feasible under $\tbudvec^{(2)}$.
If needed, the executor can extend trajectories by waiting after completion, which is consistent with the prefix order in \Cref{def:poset_trajectories}.
Hence, the set of trajectory collections is monotone in $\tbudvec$.
\end{proof}

\begin{remark}[Interpretation of execution-time budgets]
\label{rem:exec_time_interpretation}
When the Executor \gls{acr:mdpi} provides a set of trajectory collections, the budget $\tbud_i$ for robot $\robot_i$ is interpreted as an upper bound on the execution horizon of \emph{every} trajectory returned for that robot across the provided set.
This convention is consistent with \Cref{def:executor} and with the trajectory order in \Cref{def:poset_trajectories}.
\end{remark}

\subsection{The Evaluator Design Problem}
\label{subsec:general_evaluator_design}

The \textbf{Evaluator} \gls{acr:mdpi} (\Cref{fig:general_codesign_evaluator}) assesses the performance of a fleet executing a task based on the trajectories produced by the Executor.
It \R{requires} a fleet and a set of trajectory collections and \F{provides} (i) a set of maps on which the system can operate and (ii) a performance vector capturing task-relevant metrics (\Cref{def:evaluator,def:metric}).

\begin{figure}[tb]
    \centering
    \setlength{\abovecaptionskip}{0pt}
    \includegraphics[width=0.7\columnwidth]{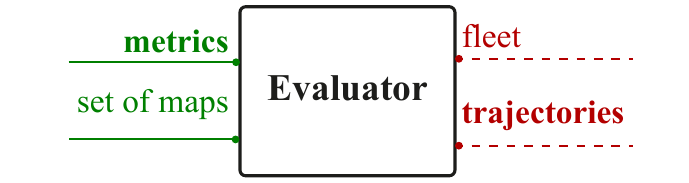}
    \caption{The \textbf{Evaluator} \gls{acr:mdpi}.}
    \label{fig:general_codesign_evaluator}
\end{figure}

\begin{assumption}[Least elements in performance spaces]
\label{ass:least_element_metrics}
Each performance space~$\tup{\perfspace_i,\preceq_{\perfspace_i}}$ admits a least element $\bot_i$.
This allows task profiles to selectively ignore metrics by setting their thresholds to $\bot_i$.
For example, for coverage-percentage metrics, one can take $\bot_i=0\%$.
\end{assumption}

\begin{definition}[Poset of performance metric vectors]
\label{def:poset_perf_metric_space}
Let~$\perfvecspace \coloneq \perfspace_1 \times \cdots \times \perfspace_q$ be the space of performance metric-value vectors.
We equip~$\perfvecspace$ with the product order: for~$\mathbf{m}_1=\tup{m_{1,1},\ldots,m_{1,q}}$ and $\mathbf{m}_2=\tup{m_{2,1},\ldots,m_{2,q}}$,
\begin{equation*}
\mathbf{m}_1 \preceq_{\perfvecspace} \mathbf{m}_2
\iff
m_{1,i} \preceq_{\perfspace_i} m_{2,i}\quad \forall i\in\{1,\ldots,q\}.
\end{equation*}
\end{definition}

Putting all of the above together, we are able to prove the monotonicity of the evaluator.

\begin{lemma}[Evaluator MDPI]
\label[lemma]{lem:evaluator_monotonicity}
Assume each performance metric $m_i$ is monotone in the fleet and in the trajectory collection (\Cref{def:metric}).
Then the Evaluator \gls{acr:mdpi} is valid.
\end{lemma}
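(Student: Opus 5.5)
We prove validity in the same style as \Cref{lem:planner_monotonicity,lem:executor_monotonicity}. We show that the feasibility map of the Evaluator is monotone. Concretely, we show that if the available \R{resources} (fleet and set of trajectory collections) increase, or the demanded \F{functionalities} (set of maps and performance vector) decrease, then every previously feasible implementation stays feasible. By \Cref{rem:mdpi_monotonicity}, this is exactly what must be checked. We treat each resource and functionality dimension separately and then combine them, since monotonicity in each argument separately gives joint monotonicity on product posets.

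\textbf{Fleet monotonicity.} Let $\fleet_1\preceq_F\fleet_2$, and fix the set of trajectory collections $\setcolltraj$ and the set of maps $\setofmaps$. By hypothesis each $m_\ell$ is monotone in its fleet argument (\Cref{def:metric}). Hence for every pair $(\ell,i)\in\mathcal{I}$ we have $m_\ell(\fleet_1,\colltraj_i,\map_i)\preceq_{\perfspace_\ell} m_\ell(\fleet_2,\colltraj_i,\map_i)$. Taking the product order of \Cref{def:poset_perf_metric_space}, $\eval_{\setofmaps}$ is monotone in the fleet. Any performance threshold met under $\fleet_1$ is therefore met under $\fleet_2$.

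\textbf{Trajectory monotonicity.} Let $\setcolltraj^{(1)}\preceq_{\setcolltraj}\setcolltraj^{(2)}$ with injection $\varphi$, as in \Cref{def:poset_set_trajectory_collections}. Each $\colltraj_{1,i}$ is paired with a distinct $\colltraj_{2,\varphi(i)}$ satisfying $\colltraj_{1,i}\preceq_{\colltraj}\colltraj_{2,\varphi(i)}$. We reindex the maps so that map $\map_i$ is now associated with $\colltraj_{2,\varphi(i)}$. This is possible because sets are unordered and the injective matching preserves distinctness. Monotonicity of each $m_\ell$ in the trajectory argument then gives componentwise domination of the metric vector. The extra collections in $\setcolltraj^{(2)}$ are simply ignored, so the set of maps that can be served does not shrink in $\preceq_{\setofmaps}$.

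\textbf{Functionality side.} If the demanded performance vector decreases in $\preceq_{\perfvecspace}$, any implementation meeting the larger vector meets the smaller one by transitivity. The same holds when the demanded set of maps decreases in $\preceq_{\setofmaps}$, since the injective inclusions compose.

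The main obstacle is the trajectory step. There the metric must be evaluated on a matched map--trajectory pair, and the injection $\varphi$ on collections must be made consistent with the map indexing in $\mathcal{I}$. I would handle this by defining the Evaluator's implementations as (fleet, map-indexed trajectory collections) pairs and choosing the reindexing explicitly through $\varphi$. The matching must also be compatible with the fleet injection in $\preceq_F$; I would rely on \Cref{rem:fleets_unlabeled} to permit relabeling so that both injections align.
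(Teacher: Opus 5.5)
Your proposal is correct and follows essentially the same route as the paper's proof: fleet monotonicity comes directly from the metric hypothesis together with the product order on performance vectors, and trajectory-set monotonicity comes from the injection $\varphi$ of \Cref{def:poset_set_trajectory_collections}, with the evaluator restricting attention to the dominating collections so that neither the metric values nor the certified set of maps decrease. Your explicit functionality-side check and map reindexing are somewhat more careful than the paper's argument, but the worry about aligning the fleet and trajectory injections is unnecessary: the two separate monotonicities combine by chaining through the intermediate point $(\fleet_2,\setcolltraj^{(1)})$.
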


\begin{proof}
We consider the two resource dimensions.

\textbf{Trajectory-set monotonicity.}
Let $\setcolltraj^{(1)} \preceq_{\setcolltraj} \setcolltraj^{(2)}$.
By \Cref{def:poset_set_trajectory_collections}, for every $\colltraj_{1,j}\in \setcolltraj^{(1)}$ there exists a distinct dominating $\colltraj_{2,\varphi(j)}\in \setcolltraj^{(2)}$ with $\colltraj_{1,j}\preceq_{\colltraj} \colltraj_{2,\varphi(j)}$.
Since each performance metric is monotone, the corresponding performance metric-value vector achieved on $\colltraj_{2,\varphi(j)}$ dominates that of $\colltraj_{1,j}$.
Moreover, any map certified as feasible under $\setcolltraj^{(1)}$ remains feasible under $\setcolltraj^{(2)}$ because the evaluator may restrict attention to the dominating subset.

\textbf{Fleet monotonicity.}
Let $\fleet_1\preceq_F \fleet_2$ and fix $\setcolltraj$.
By performance metric monotonicity in the fleet, evaluating the same trajectory collections under $\fleet_2$ yields metric values that dominate those under $\fleet_1$.
Similarly, any map certified feasible under $\fleet_1$ remains feasible under $\fleet_2$.
\end{proof}

\begin{remark}[Internal handling of auxiliary map information]
\label{rem:map_plus_internal}
Auxiliary map information is not exposed as an explicit wire in the co-design diagram.
Instead, it is treated as an internal parameter of the performance metric definitions.
This design choice keeps the evaluator interface minimal while allowing individual metrics to depend on additional task- or environment-specific information as needed. 
An example of auxiliary information is the locations of targets to be detected, which are not known a priori to the planner or executor.
\end{remark}

\subsection{Putting it All Together: The Global Co-Design Problem}
\label{subsec:global_codesign_problem}

\Cref{fig:general_codesign_model} illustrates the complete co-design optimization problem obtained by interconnecting the individual \glspl{acr:mdpi} introduced in
\Cref{subsec:general_fleet_design},
\Cref{subsec:general_planner_design},
\Cref{subsec:general_executor_design}, and
\Cref{subsec:general_evaluator_design}.

At the highest level, the \textbf{Fleet Composer} \gls{acr:mdpi} requires the global resources \R{total cost} and \R{total energy}.
In return, it provides a \F{fleet}, represented by effective dynamical and sensing capabilities, together with a vector of \F{execution-time budgets} specifying how long each robot can sustain these capabilities.

The \textbf{Planner} \gls{acr:mdpi} requires the \R{fleet} provided by the Fleet Composer together with a \R{set of maps} (the environments in which the system must operate).
It produces a \F{set of waypoint collections} describing feasible waypoint assignments for the fleet across the considered maps.

The \textbf{Executor} \gls{acr:mdpi} requires this \R{set of waypoint collections}, along with the \R{fleet} and the corresponding \R{execution-time budgets}, and provides a \F{set of trajectory collections} that are dynamically feasible and respect the time budgets.

Finally, the \textbf{Evaluator} \gls{acr:mdpi} requires the resulting \R{set of trajectory collections} (and the \R{fleet}) and provides two global functionalities:
(1) a \F{set of maps} that the system is certified to operate on, and
(2) \F{performance metrics} quantifying the system's performance on those maps.
These metrics constitute the ultimate functionality of the design problem and encode task requirements via the task profile (\Cref{def:task_profile}).

\begin{lemma}[Well-posedness of the global co-design problem]
\label{prop:global_mdpi}
The interconnection in \Cref{fig:general_codesign_model} defines a composite \gls{acr:mdpi}.
\end{lemma}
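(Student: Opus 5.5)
The plan is to reduce the statement to the closure of \glspl{acr:mdpi} under series, parallel, and feedback composition~\cite{zardini2023dissertation}, using the component results already established. By \Cref{lem:fleet_composer_monotone}, \Cref{lem:planner_monotonicity}, \Cref{lem:executor_monotonicity}, and \Cref{lem:evaluator_monotonicity} (the last under the stated assumption that each metric is monotone), each of the four blocks in \Cref{fig:general_codesign_model} is a valid \gls{acr:mdpi}. What remains is to show that the wiring in the figure is built solely from the admissible composition operators, acting on wires whose types are posets.

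\textbf{Step 1: check wire types.} I will list each wire and confirm that the provided functionality and the required resource it joins live in the same poset, so that the interconnection constraint $r \preceq f$ is well defined. The fleet wire carries the order $\preceq_F$ of \Cref{def:partial_order_fleets}, which is a partial order after the permutation identification of \Cref{rem:fleets_unlabeled}. The execution-time budgets carry the componentwise order on $\mathbb{R}_{\ge 0}^n$. Waypoint sets carry $\preceq_{\setcollwpoints}$, trajectory sets carry $\preceq_{\setcolltraj}$, and map sets carry $\preceq_{\setofmaps}$.

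\textbf{Step 2: handle wires used more than once.} The fleet feeds three consumers: the Planner, the Executor, and the Evaluator. I will model this fan-out with the monotone diagonal (copy) map $\fleet \mapsto (\fleet,\fleet,\fleet)$. This map is monotone for the product order, so copying the wire is itself a series composition with a monotone identity-like \gls{acr:mdpi}.

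\textbf{Step 3: assemble the diagram.} The Planner, Executor, and Evaluator are chained in series through the waypoint and trajectory wires. The Fleet Composer is placed in series ahead of them through the fleet and budget wires. The parallel parts (for example, fleet plus budgets entering the Executor) are tensor products.

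\textbf{Step 4: the map-set loop.} This is the one genuine feedback. The Evaluator provides a set of maps, and the Planner requires a set of maps. Connecting them imposes $\setofmaps_{\text{req}} \preceq_{\setofmaps} \setofmaps_{\text{prov}}$, which is exactly the trace operation. Feedback composition is an \gls{acr:mdpi} operator by the traced monoidal structure cited in the footnote. Combining Steps 2--4, the whole diagram is a finite expression in these operators, so closure yields a composite \gls{acr:mdpi}. Its external functionalities are the map set and the performance vector, and its external resources are total cost and total energy.

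\textbf{Main obstacle.} I expect the main obstacle to be whether the order relations in Step 1 are genuine partial orders; the feedback in Step 4 depends on this, because the trace needs poset-typed wires. The injective-matching relations $\preceq_{\setofmaps}$, $\preceq_{\setcollwpoints}$, and $\preceq_{\setcolltraj}$ are reflexive and transitive (compose the injections). Antisymmetry holds only up to identification: for finite collections, mutual injective domination forces equal cardinalities and a bijection. If antisymmetry fails, I would pass to the quotient by the equivalence they induce, as done for fleets. Failing that, I would observe that the composition theory only needs preorders. Either way I will state explicitly which convention is adopted. I will not try to establish the Kleene/Scott-continuity conditions, since the lemma claims well-posedness, not solvability.
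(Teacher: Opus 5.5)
Your proposal is correct and follows the same route as the paper. The paper's proof cites the component lemmas (Robot, Fleet Composer, Planner, Executor, Evaluator) and then invokes closure of MDPIs under series, parallel, and feedback composition; your explicit treatment of wire typing, the fleet fan-out via a copy map, the map-set loop as a trace, and antisymmetry up to permutation adds detail the paper leaves implicit without changing the argument.
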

\begin{proof}
Each block is modeled as a \gls{acr:mdpi} and is monotone with respect to its resource inputs
(\Cref{lem:robot_mdpi_monotone,lem:fleet_composer_monotone,lem:planner_monotonicity,lem:executor_monotonicity,lem:evaluator_monotonicity}).
By the closure properties of monotone co-design under series, parallel, and feedback composition~\cite{zardini2023dissertation}, the resulting interconnected multigraph is again a \gls{acr:mdpi}.
\end{proof}

From a system-level perspective, the entire pipeline can therefore be interpreted as a single composite \gls{acr:mdpi} that \R{requires} \R{total cost} and \R{total energy} and \F{provides} the achieved task-level performance metrics.
By querying this composite \gls{acr:mdpi}, one can directly identify Pareto-optimal heterogeneous multi-robot systems.
For instance, a \texttt{FixFunMinRes} query can be used to determine all system designs that fulfill a specified task profile while minimizing the global resources (cost and energy).
Importantly, the resulting Pareto front not only quantifies trade-offs at the system level; it also specifies concrete implementation choices for each constituent block, including fleet composition, robot designs, and planner/executor selections.

Because the problem is compositional, each individual \gls{acr:mdpi} also represents a meaningful design problem on its own and can be queried independently.
For example, the Fleet Composer can be queried in isolation to explore trade-offs in fleet design:
\texttt{FixFunMinRes} identifies Pareto-optimal fleets that achieve specified dynamical and sensing properties while minimizing total cost and energy, whereas \texttt{FixResMaxFun} identifies the best achievable fleet capabilities under given cost and energy budgets.
Analogous subsystem-level queries can be posed for robot design, planning, or execution, enabling targeted analysis without requiring evaluation of the entire pipeline.

Beyond formal correctness, this formulation highlights the central conceptual advantage of the proposed framework.
The interfaces between modules are intentionally agnostic to implementation details:
planners, executors, controllers, and evaluators can be replaced or extended without modifying the surrounding structure, as long as they respect the defined poset-valued interfaces.
Similarly, tasks are specified abstractly via performance metrics and task profiles, which allows the framework to accommodate a wide range of multi-agent missions without structural changes.

Most importantly, the co-design formulation makes explicit the feedback between design layers that is inherent to heterogeneous multi-robot system design.
Robot-level decisions (e.g., battery sizing, sensor suites, actuation choices) influence dynamical properties and energy availability, which in turn affect planning and execution feasibility.
Conversely, the planner and executor determine execution-time requirements and trajectory feasibility, which feed back into robot and fleet design through the energy-capacity constraints.
Likewise, the evaluator determines which maps and performance levels are achievable and feeds this information back to the planner.
Capturing and exploiting these interdependencies in a principled and computationally tractable way is precisely what the monotone co-design framework enables.

\subsection{Computational Advantage of Compositional Co-Design}
A central motivation for the proposed formulation is computational.
Designing a heterogeneous multi-robot system requires simultaneous choices over robot hardware, fleet composition, planning, execution, and task-level evaluation.
If treated monolithically, these choices induce a very large Cartesian product of candidate system designs, and evaluating each candidate requires running the full pipeline.
The compositional co-design formulation does not remove the need to model or simulate components, but it changes \emph{how} this computation is organized: instead of enumerating all system-level combinations, it reasons locally through monotone interfaces and propagates only Pareto-relevant information across components.

We first quantify the size of the monolithic design space and then explain why the co-design formulation is computationally advantageous.

\subsubsection{Size of the Monolithic Design Space}
Suppose the design of each robot $i$ is specified by $n_i$ modules, and each module $j$ admits $m_{i,j}$ candidate implementations.
The number of designs for robot $i$ is then
\begin{equation}
\label{eq:num_robot_designs_type_i}
C_i \;=\; \prod_{j=1}^{n_i} m_{i,j},
\end{equation}
resulting in $\prod_{i}^{N}C_i$ design options for a heterogeneous fleet with $N$ agents.
Restricting the heterogeneity level, namely the number of robot types, makes the design space much smaller.
Let $T$ denote the number of robot types, $N_{\max}$ the maximum number of robots allowed per type.
For a fixed type $i$, one may either deploy no robot of that type, or deploy $k\in\{1,\dots,N_{\max}\}$ robots of that type together with one of the $C_i$ robot designs.
Hence, the number of possible sub-fleet choices for type $i$ is
\begin{equation}
\label{eq:num_subfleet_type_i}
1 + N_{\max} C_i.
\end{equation}
Assuming independence across robot types, the total number of heterogeneous fleet designs is therefore
\begin{equation}
\label{eq:num_fleet_designs}
\prod_{i=1}^{T} \left( 1 + N_{\max} C_i \right).
\end{equation}
Suppose $P$ is the number of planner implementations, and $E$ is the number of executor implementations, then including those choices yields the total number of system-level design candidates
\begin{equation}
\label{eq:num_system_designs}
\begin{aligned}
N_{\mathrm{mono}}
&=
P E \prod_{i=1}^{T} \left( 1 + N_{\max} C_i \right)\\
&=
P E \prod_{i=1}^{T}
\left( 1 + N_{\max} \prod_{j=1}^{n_i} m_{i,j} \right).
\end{aligned}
\end{equation}
Equation~\eqref{eq:num_system_designs} makes explicit the combinatorial nature of the problem: the search space is multiplicative across robot types and across the hardware/software design options within each type.
Moreover, evaluating a single candidate requires running the complete pipeline, planning, trajectory generation, and task evaluation, so the cost of a monolithic brute-force search scales as
\begin{equation}
\label{eq:bf_total_cost}
\mathcal{O}\!\left(N_{\mathrm{mono}} \cdot c_{\mathrm{full}}\right),
\end{equation}
where $c_{\mathrm{full}}$ denotes the cost of one full system-level evaluation.

Some module choices may naturally be continuous rather than discrete.
In that case, a monolithic method must still resolve them either through discretization or through repeated derivative-free evaluations of the full pipeline.
For the purpose of counting candidate designs, this simply replaces $m_{i,j}$ by the number of effective evaluations used to resolve that module.
The key point is unchanged: each additional continuous design variable increases the number of full-pipeline evaluations required by a monolithic approach.

\paragraph*{Concrete size in our case studies}
In the search-and-coverage instantiation of \Cref{sec:case-studies}, we consider $T=3$ robot types (medium aerial, large aerial, and ground), with at most $N_{\max}=3$ robots per type, $P=3$ planners, and $E=1$ executor.
Each robot type is defined by three modules: actuation, sensing, and battery.
The actuation and sensing modules admit $2$ and $1$ choices, respectively.
For the battery, we consider $8$ battery technologies, each parameterized by a capacity value.

To obtain a concrete estimate, we discretize the battery capacity over the interval $(0,C_{\max}]$ with resolution $r_{\mathrm{cap}}$, yielding
\begin{equation}
\label{eq:n_bat}
N_{\mathrm{bat}}(r_{\mathrm{cap}})
=
\left\lfloor \frac{C_{\max}}{r_{\mathrm{cap}}}\right\rfloor
\end{equation}
capacity choices per battery technology, and therefore
\begin{equation}
\label{eq:num_battery_impl}
8\,N_{\mathrm{bat}}(r_{\mathrm{cap}})
\end{equation}
battery implementations per robot type.
With $C_{\max}=400\,\mathrm{Wh}$ and $r_{\mathrm{cap}}=2\,\mathrm{Wh}$, we obtain $N_{\mathrm{bat}}=200$.

Accordingly, the number of robot designs per type is
\begin{equation}
\label{eq:num_robot_designs_case}
C_i
=
2 \cdot 1 \cdot 8\,N_{\mathrm{bat}}
=
3200,
\qquad i=1,2,3.
\end{equation}
The total number of fleet designs is therefore
\begin{equation}
\label{eq:num_fleet_designs_case}
\left(1 + 3 \cdot 3200\right)^3
=
9601^3,
\end{equation}
and including planner/executor choices gives
\begin{equation}
\label{eq:num_system_designs_case}
N_{\mathrm{mono}}
=
3 \cdot 9601^3
\approx
2.66 \times 10^{12}
\end{equation}
candidate system-level designs.

Even in this deliberately modest setup, brute-force evaluation of all candidates is clearly impractical.
Importantly, this count only reflects the number of candidate designs; it does not yet account for the computational cost of evaluating each candidate through planning, execution, and simulation-based metric computation.

\subsubsection{Why Co-Design is Computationally Advantageous}
The advantage of co-design is not that it makes planning or simulation free.
Rather, it avoids paying the full Cartesian-product cost in \eqref{eq:num_system_designs}.
The compositional formulation exploits the fact that the global design problem is not an unstructured black box:
robot design, fleet composition, planning, execution, and evaluation interact through specific monotone interfaces, as captured by
\Cref{fig:general_codesign_model,fig:general_fleet_composer_detailed}.

This structure yields two computational benefits.

\paragraph*{1) Early pruning of dominated local designs}
In a monolithic search, every candidate system design is treated as a separate object and must be evaluated as such.
In contrast, co-design reasons locally at the level of components and retains only Pareto-relevant information at each interface.
Dominated local implementations are discarded as soon as they are recognized as such and therefore never participate in downstream combinations.
For example, once a robot design is dominated by another design in terms of provided capabilities and required resources, there is no need to combine it with every planner, fleet multiplicity, and task instance.

\paragraph*{2) Solving by composition rather than enumeration}
The global problem is represented as an interconnection of monotone design problems.
Series, parallel, and feedback compositions are handled directly in the co-design formalism, and feedback loops are resolved through fixed-point algorithms~\cite{zardini2023dissertation}.
As a consequence, solving a system-level query does not require explicit traversal of all elements in the Cartesian product
\eqref{eq:num_system_designs}; instead, it consists of repeated propagation and pruning of antichains through the interconnection graph.

Therefore, the computational burden is governed by:
(i) the size of the local implementation spaces,
(ii) the widths of the intermediate antichains propagated through the interfaces, and
(iii) the number of fixed-point iterations required by the feedback loops,
rather than by the full number of monolithic candidates.
This is precisely the structural advantage of the co-design approach: multiplicative coupling in the naive search space is replaced by local reasoning plus controlled composition.

\paragraph*{Population cost versus query cost}
It is useful to distinguish between two phases.
First, one must \emph{populate} the component-level design problems, which may still require simulation or numerical computation (e.g., evaluating planners and executors on relevant instances).
Second, one must \emph{solve} system-level co-design queries.
The proposed framework does not eliminate the first cost, but it prevents the second phase from degenerating into a brute-force search over all global combinations.
This distinction is especially important when multiple queries are asked over the same component library, since the populated component models can be reused across tasks, thresholds, or map sets.

In summary, the computational advantage of co-design lies in replacing the monolithic enumeration of
\eqref{eq:num_system_designs}
with compositional reasoning over monotone interfaces.
For heterogeneous multi-robot systems, where the naive design space is already enormous even in modest examples, this structural reduction is essential to make system-level design queries tractable.

\begin{remark}[Intellectual tractability]
The benefit of the co-design formulation is not only computational.
A monolithic optimization over robot design, fleet composition, planning, execution, and task evaluation is also difficult to interpret and manipulate: all couplings are hidden inside one large black-box search problem, making it hard to understand which subsystem drives a given trade-off or infeasibility.
By exposing explicit interfaces between modules, the co-design formulation makes these dependencies legible.
Designers can reason locally about robot design, fleet design, planning, or execution, and can modify one part of the architecture, for instance, by changing a planner, adding a robot type, or introducing a new task metric, without reformulating the entire problem.
This modularity is especially valuable in collaborative settings, where different experts contribute models for different subsystems.
Accordingly, the framework improves not only computational tractability but also the intellectual tractability of heterogeneous multi-robot system design.
\end{remark}

%% file: Sections/6_case_studies.tex
\section{Case Study: Search and Coverage}
\label{sec:case-studies}

In this section, we instantiate the co-design pipeline developed above on a realistic heterogeneous multi-robot design problem.
We focus on \emph{search-and-coverage} missions, which capture a broad class of applications including search-and-rescue, environmental monitoring, and industrial inspection.
The goal of the case studies is twofold: (i) to demonstrate end-to-end use of the proposed framework on a concrete domain, and (ii) to illustrate how changing the design space (robot components and planners) and the task profile (maps and performance metrics) changes the set of Pareto-optimal system designs, without changing the overall co-design structure.

We consider workspaces~$M \in \setofmaps$ that are rectangular two-dimensional regions to be covered (i.e.,\ $k=2$ in \cref{def:map}).
For each workspace~$M$, we further assume a (finite) set of target locations~$\mathbf{x}_M^i \in M$, $i = 1, \dots, N_M$, representing objects whose positions are unknown to the planner and must be detected (e.g., gas leak sources in industrial inspection).
Accordingly, we define two performance metrics:

\begin{itemize}
    \item $m_c \in \tup{[0,1],\geq}$: the fraction of the map area covered by the fleet (higher is better);
    \item $m_\delta \in \tup{\mathbb{N},\geq}$: the number of targets detected with probability at least a specified confidence level $\delta \in (0,1)$ (higher is better).
\end{itemize}

We formalize how coverage and detection are scored in \Cref{subsec:coverage_detection_models}, after introducing the available robot types and component catalogs used to populate the design space.

\subsection{Robot Design}

\subsubsection{Exposed dynamical properties}
In our case studies, each robot's dynamical properties $\dprop$ (\Cref{def:robot_dyn_prop}) comprise:
\begin{itemize}
    \item $\displaystyle v_{\max} \!\in\! (\mathbb{R}_{\geq 0}, \geq)$: maximum velocity;
    \item $\displaystyle a_{\max,\mathrm{lat}} \!\in\! (\mathbb{R}_{\geq 0}, \geq)$: maximum lateral acceleration;
    \item $\displaystyle a_{\max,\mathrm{lon}} \!\in\! (\mathbb{R}_{\geq 0}, \geq)$: maximum longitudinal acceleration;
    \item $\displaystyle \dot{\theta}_{\max} \!\in\! (\mathbb{R}_{\geq 0}, \geq)$: maximum turning rate;
    \item $\displaystyle r_{\mathrm{turn}} \!\in\! (\mathbb{R}_{\geq 0}, \leq)$: minimum turning radius;
    \item $\displaystyle c_{\mathrm{vel}} \!\in\! (\mathbb{R}_{\geq 0}, \leq)$: power--velocity coefficient;
    \item $\displaystyle c_{\mathrm{acc}} \!\in\! (\mathbb{R}_{\geq 0}, \leq)$: power--acceleration coefficient;
    \item $\displaystyle p \!\in\! \{\mathrm{RS}, \mathrm{DD}\}$: path type.
\end{itemize}
Here,~$\mathrm{RS}$ denotes Reeds--Shepp and $\mathrm{DD}$ denotes differential-drive; these two types are incomparable~\cite{lavalle2006planning}.
The chosen partial orders reflect natural preferences: larger bounds on speed/acceleration/turn rate correspond to more capable motion, smaller turning radii correspond to improved maneuverability, and smaller power coefficients correspond to improved energetic efficiency.

\subsubsection{Power and energy model}
Given a trajectory with velocity profile~$v(t)$ and acceleration profile~$a(t)$, the instantaneous power consumption of a robot at time~$t$ is modeled as
\begin{equation}
\label{eq:instant_power_calculation}
P(t) = P_{\text{idle}} + c_{\text{vel}}\, v(t)^{2} + c_{\text{acc}}\, |a(t)|,
\end{equation}
where~$P_{\text{idle}}$ captures platform/sensor idling power and the remaining terms capture motion-dependent power draw.

\begin{remark}[Energy accounting in the co-design instantiation]
Evaluating \cref{eq:instant_power_calculation} requires access to the executed trajectory (and hence to the executor output).
In our instantiation of the co-design graph, motion-dependent energy is obtained by integrating the trajectory-dependent terms in \Cref{eq:instant_power_calculation}, along the simulated execution.
This allows the Executor to expose the energy required by a given plan, while the Fleet Composer enforces feasibility by ensuring that the selected battery capacity dominates the required energy, analogously to the execution-time coupling in \Cref{subsec:general_fleet_design}.
\end{remark}

\subsubsection{Exposed sensing capabilities}
The sensing capabilities of each robot~$\sensing$ (\Cref{def:robot_sens_prop}) are characterized by the sensing radius~$r_{\mathrm{sensing}} \in (\mathbb{R}_{\geq 0}, \geq)$ and the instantaneous detection rate, which is determined by $\lambda_{\mathrm{base}} \in (\mathbb{R}_{\geq 0}, \geq)$, $\sigma_d \in (\mathbb{R}_{\geq 0}, \geq)$ and $\beta_v \in (\mathbb{R}_{\geq 0}, \leq)$ (described in \Cref{subsec:coverage_detection_models}).

\subsubsection{Component catalogs and robot types}
Rather than exploring the property spaces~$\dprop$ and~$\sensing$ directly, we consider a more realistic setup in which the system designer selects from a modular catalog of robot types, following the component-based \gls{acr:mdpi} model shown in \Cref{fig:general_robot_detailed}. 
This makes reasoning about component costs more faithful to practice, as trade-offs arise from discrete module choices rather than from assumed continuous spaces~$\dprop$ and~$\sensing$. Concretely, we consider three robot types: a medium-sized aerial robot, a large aerial robot, and a ground robot. 
Within each type, multiple design choices must be made: (i) the battery module (determining energy capacity), (ii) the sensing module (which determines the sensing radius and the instantaneous detection rate), and (iii) the actuation module (which determines the robot's dynamical properties). 
The available battery modules are shared across robot types.
Each battery technology is characterized by its specific energy-to-mass ratio $\rho$ (Wh/kg) and its energy-to-cost ratio $\alpha$ (Wh/USD).
The available battery technologies are listed in \Cref{tab:battery-modules}.
Each robot type offers two actuation-module variants (summarized in \Cref{tab:actuation-modules}) and one sensing module (specified in \Cref{tab:sensing-modules}).

\input{Figures/table_battery_modules}
\input{Figures/table_actuation_modules}
\input{Figures/table_sensing_modules}

\begin{figure}[tb]
    \centering
    \setlength{\abovecaptionskip}{0pt}
    \includegraphics[width=0.85\columnwidth]{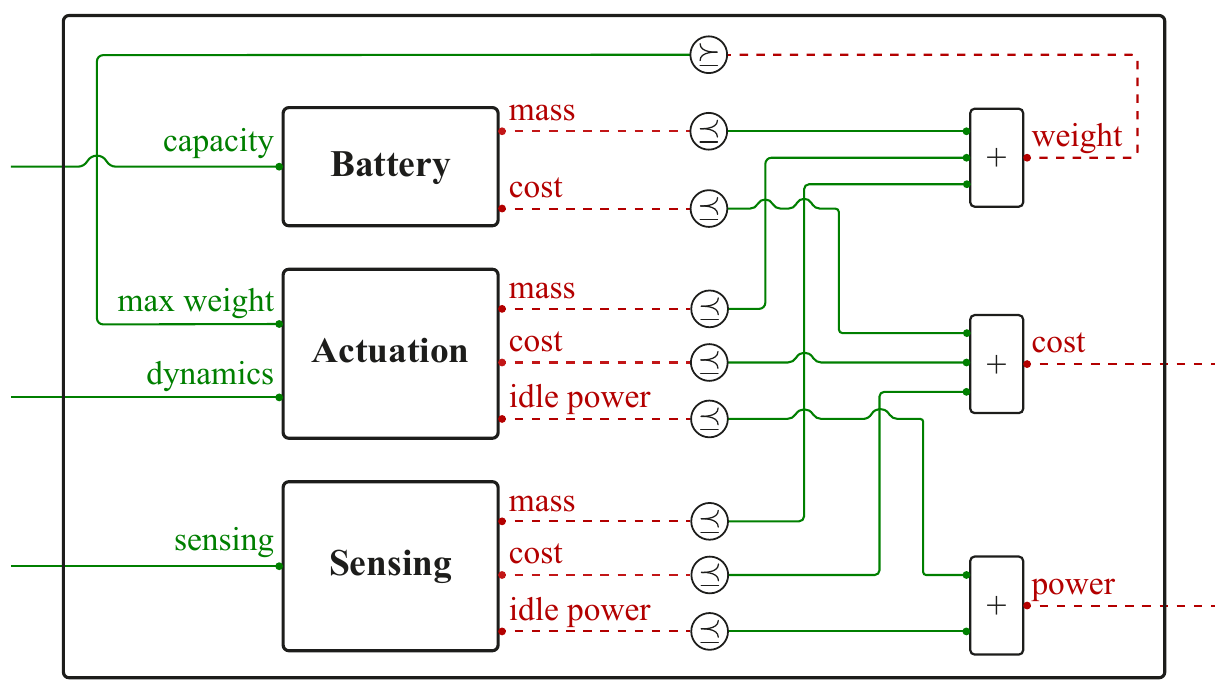}
    \caption{The \textbf{Robot} \gls{acr:mdpi}.}
    \label{fig:general_robot_detailed}
\end{figure}

\subsection{Coverage and Detection Models}
\label{subsec:coverage_detection_models}

A point~$\mathbf{x} \in M$ is covered by a robot at position~$\mathbf{r}$ if it lies within the robot's sensing disk, i.e.,\ $\|\mathbf{x} - \mathbf{r}\| \leq r_{\mathrm{sensing}}$.
The coverage metric~$m_c$ is then the fraction of the workspace covered by the union of sensing disks swept along all fleet trajectories.

Each robot's sensor is characterized by an instantaneous detection rate
\begin{equation}
\label{eq:detection_rate}
\lambda(t,\mathbf{x})
=
\lambda_{\mathrm{base}} \,
f\big( d(t,\mathbf{x}) \big) \,
g\big( \|v(t)\| \big),
\end{equation}
where $d(t,\mathbf{x}) = \|\mathbf{r}(t) - \mathbf{x}\|$ is the distance between the robot position~$\mathbf{r}(t)$ and a target location~$\mathbf{x}$.
The spatial decay and velocity-dependent degradation are modeled, respectively, as
\begin{align}
\label{eq:spatial_decay}
f(d) &=
\exp \big(\! - \frac{d^2}{2\sigma_d^2} \big) \,
\mathbf{1}\{ d \leq r_{\mathrm{sensing}} \}, \\
g(\|v\|) &=
\exp \big(\! - \beta_v \|v\| \big),
\end{align}
where $\sigma_d$ controls the spatial uncertainty of the sensor and $\beta_v$ captures the degradation of detection performance with increasing robot velocity.
For a given target location~$\mathbf{x}$, the cumulative exposure over a time horizon~$[0,T]$ is
\begin{equation}
\label{eq:exposure}
E(\mathbf{x})
=
\int_{0}^{T} \lambda(t,\mathbf{x}) \, dt.
\end{equation}
Modeling detection opportunities as independent micro-events and assuming that exposures add linearly across robots, the fleet-level probability of detecting a target at location~$\mathbf{x}$ is
\begin{equation}
\label{eq:fleet_hit_probability}
P_{\mathrm{hit}}(\mathbf{x})
=
1 - \exp \Big(\! - \sum_{r} E_r(\mathbf{x}) \Big),
\end{equation}
where $E_r$ is the cumulative exposure accumulated by robot~$r$ over its trajectory.
The detection metric~$m_\delta$ then counts the number of targets~$\mathbf{x}_M^i$ for which~$P_{\mathrm{hit}}(\mathbf{x}_M^i) \geq \delta$.

\begin{figure}[tb]
    \centering
    \setlength{\abovecaptionskip}{0pt}
    \includegraphics[width=0.95\linewidth]{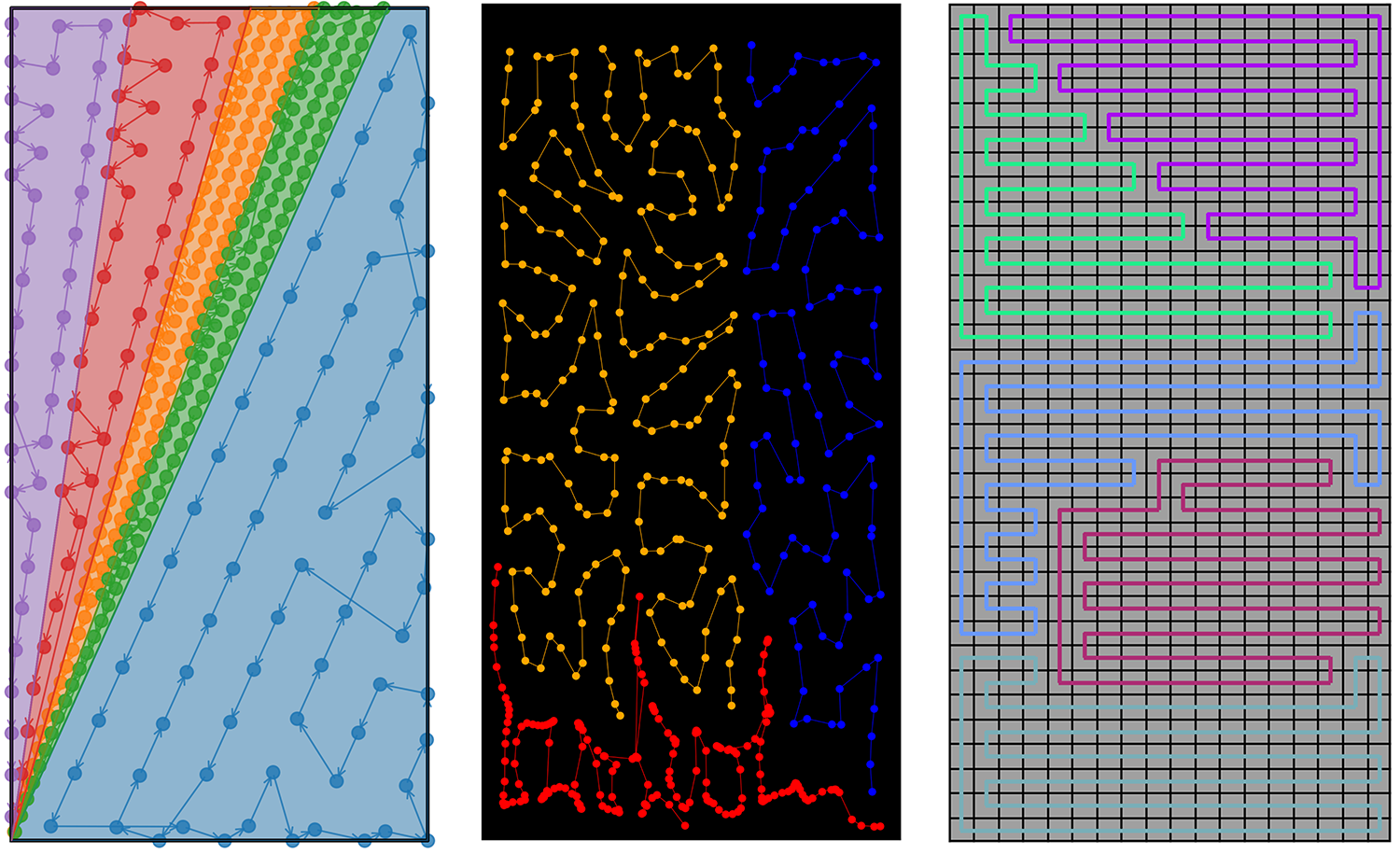}
    \caption{Example waypoint solutions generated by three planners for a rectangular map of dimensions
$400\,\mathrm{m} \times 800\,\mathrm{m}$.
From left to right: \gls{acr:agd}, \gls{acr:mrta}, and \gls{acr:darp}.}
    \label{fig:example_result_planner}
\end{figure}

\subsection{Planning Algorithms}
To populate the Planner MDPI with concrete implementations, we adapt three coverage planning algorithms from the literature: \gls{acr:agd}~\cite{kazemdehbashi2024adaptive}, \gls{acr:mrta}~\cite{hu2022large}, and \gls{acr:darp}~\cite{kapoutsis2017darp}.
All three planners output per-robot waypoint sequences, but differ substantially in how they decompose the workspace and allocate subregions to robots, which leads to qualitatively different waypoint structures (see \Cref{fig:example_result_planner}).

\paragraph{AGD planner}
Given a robot with a known sensing radius and a designated subregion, the \gls{acr:agd} algorithm~\cite{kazemdehbashi2024adaptive} generates waypoints by adaptively tiling the area with grid cells fitted to the robot's sensing footprint, minimizing the number of waypoints needed to guarantee full coverage.
To extend the method to heterogeneous multi-robot fleets, we partition the workspace into exclusive subregions using the anchored area decomposition of Hert and Lumelsky~\cite{hert1998polygon}. 
Each robot~$\robot_i$ receives a fraction of the total area proportional to~$v_i \, r_{i,\mathrm{sensing}}$, ensuring that faster robots with larger sensing footprints are assigned commensurately larger regions.
The anchored area decomposition then partitions the workspace into exclusive subregions according to these weights, after which the \gls{acr:agd} algorithm is applied independently to each robot--subregion pair. 
To determine the traversal order of the generated waypoints within each subregion, we apply the Christofides heuristic for the metric traveling salesman problem~\cite{christofides2022worst}.

\paragraph{MRTA planner}
Unlike \gls{acr:agd}, the \gls{acr:mrta} framework of Hu et al.~\cite{hu2022large} natively supports heterogeneous fleets. 
It decomposes the environment into subregions and allocates robot teams based on predicted coverage performance; each team then independently generates trajectories by minimizing an ergodic metric to achieve spatially uniform coverage. 
In our integration, we adopt only the subregion assignment and ergodic planning components, simplifying the original model by assuming uniform terrain across the workspace. 
Each robot's traversal dynamics are therefore taken directly from our actuation catalog (\cref{tab:actuation-modules}) rather than from the terrain-dependent velocity model of~\cite{hu2022large}.

\paragraph{DARP planner}
The \gls{acr:darp} algorithm~\cite{kapoutsis2017darp} computes an equitable, connected, and collision-free partition of a discretized workspace by iteratively assigning grid cells to robots until a balanced division is achieved. 
Since \gls{acr:darp} is designed for homogeneous teams, we adapt it to our heterogeneous setting by setting the grid resolution to the sensing footprint of the smallest robot in the fleet, ensuring that every robot can cover any cell assigned to it. 
The standard \gls{acr:darp} assignment procedure is then applied to obtain non-overlapping subregions, which serve as the basis for per-robot trajectory generation.

\Cref{fig:example_result_planner} shows representative waypoint solutions produced by the three planners on a $400\,\mathrm{m} \times 800\,\mathrm{m}$ workspace, illustrating the qualitative differences in spatial decomposition and waypoint structure across \gls{acr:agd}, \gls{acr:mrta}, and \gls{acr:darp}.

\subsection{Executor Details}

The \textbf{Executor} maps a fleet, a waypoint assignment, and a time budget to dynamically feasible trajectories (\cref{def:executor}).
In our case study, it is implemented through a simulation-based physics engine.
Trajectory generation respects the motion constraints induced by the selected actuation module, including bounds on speed, acceleration, and turning rate as well as the path type (e.g., Reeds--Shepp versus differential drive).
In the present case study, we use Reeds--Shepp trajectories for all robots, thereby fixing the trajectory model and isolating other design trade-offs.
For each robot, the physics engine produces time-indexed state trajectories together with velocity and acceleration profiles, which are used to evaluate the power model in \Cref{eq:instant_power_calculation} and compute the per-robot execution energy.
The resulting trajectories and energy requirements are then passed to the \textbf{Evaluator} for task-level scoring and to the Fleet Composer for feasibility checks on battery sizing, respectively.
\Cref{fig:example_result_executor} shows an example trajectory generated for a \gls{acr:darp} waypoint assignment on a $400\,\mathrm{m} \times 800\,\mathrm{m}$ map.

\begin{figure}[tb]
    \setlength{\abovecaptionskip}{0pt}
    \centering
    \includegraphics[width=.90\linewidth]{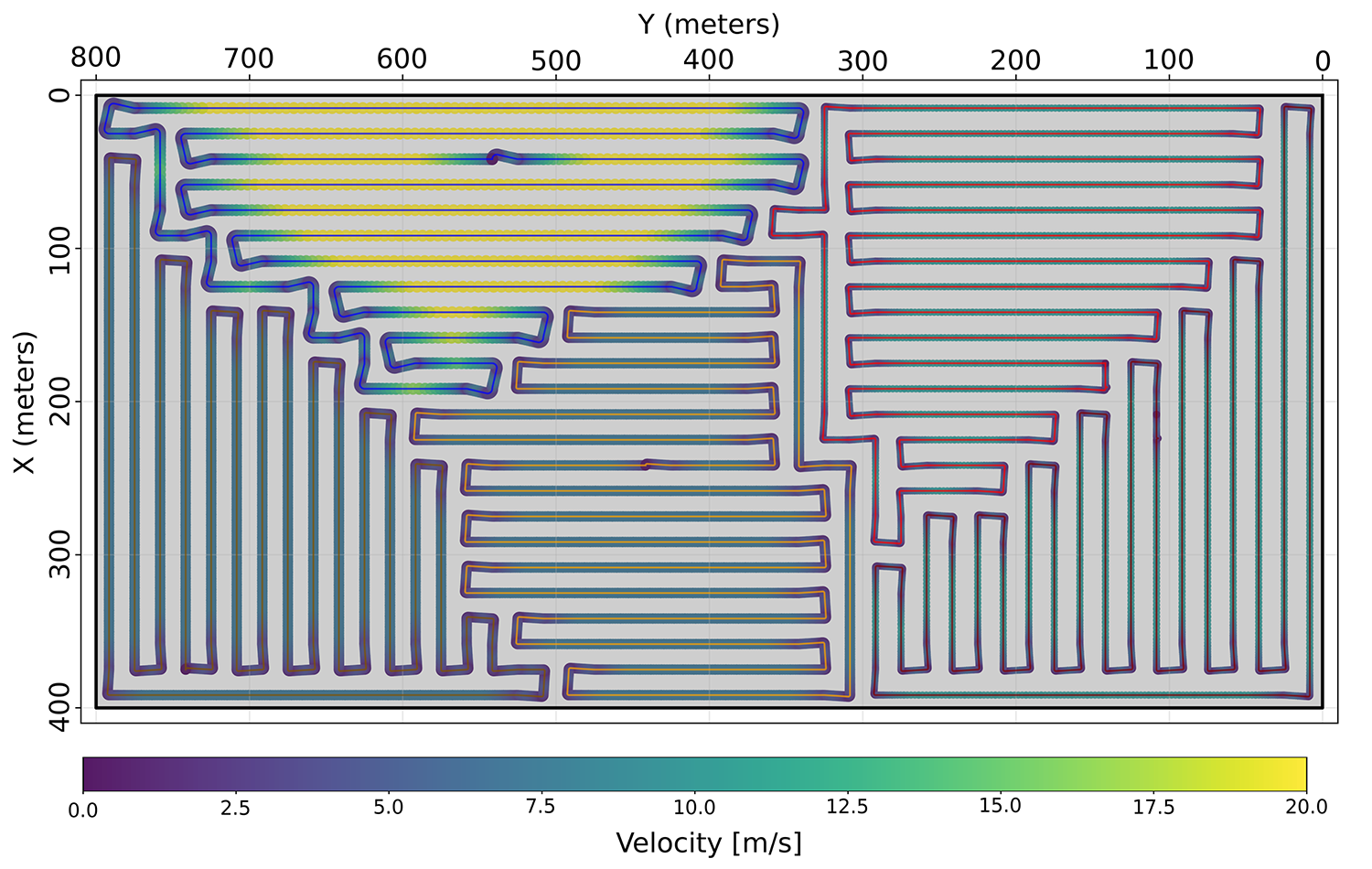}
    \caption{%
    Example trajectories generated by the \textbf{Executor} from a \gls{acr:darp} waypoint assignment on a $400\,\mathrm{m} \times 800\,\mathrm{m}$ map. 
    The resulting trajectories respect the motion constraints induced by the selected actuation modules.
    }
    \label{fig:example_result_executor}
\end{figure}

\begin{remark}[Connection to the formal co-design model]
The search and coverage task described above adheres to the interfaces of the co-design model formalized in \Cref{sec:codesign-formalization}. 
In particular, all three planning algorithms conform to the fleet-map to waypoints interface regardless of their internal implementation, and the coverage and detection metrics follow the trajectory-based evaluation pipeline.
For the Fleet Composer, since we consider three distinct robot types, we do not enumerate robots individually. 
Instead, the Fleet Composer contains one design block per type, together with a scalar wire representing the number of instances of that type. 
Aggregate quantities, such as total cost and energy, are obtained by scaling per-type resources and summing across types. 
This reduces the computational burden without altering the semantic interfaces.
\end{remark}

\section{Numerical Results}
\label{sec:numerical_results}

Here, we present four case studies that progressively stress different aspects of the framework.
Case Study~I demonstrates how a standard \texttt{FixFunMinRes} query exposes system-level trade-offs and returns the corresponding Pareto-optimal implementation choices.
Case Study~II compares our approach with sequential optimization baselines.
Case Study~III illustrates how additional structural constraints (here, a homogeneity constraint) can be enforced and optimized over.
Finally, Case Study~IV shows how the framework supports co-design under uncertainty by jointly optimizing across multiple potential deployment scenarios.

Throughout, we report trade-offs over three global resources: \R{total cost}, \R{total energy}, and \R{mission completion time} (makespan).
We visualize Pareto-optimal designs through pairwise two-dimensional projections; each plotted point corresponds to a feasible multi-robot system that is nondominated in the relevant resource plane, together with a concrete selection of robot modules, fleet multiplicities, and planner.

\subsection{Case Study~I (Design Trade-offs)}
We examine design trade-offs for a system tasked with covering a $400\,\mathrm{m} \times 800\,\mathrm{m}$ workspace.
The fleet may include up to three robots per type (maximum fleet size of nine).
We solve a \texttt{FixFunMinRes} query with required functionality fixed to~$m_c \geq 0.95$ (i.e., at least $95\%$ coverage), and the co-design solver minimizes the required system-level resources subject to this constraint.

\Cref{fig:pareto_front_case_study_1} shows three pairwise projections of the resulting Pareto front.

\begin{figure}[tb]
\centering
\setlength{\abovecaptionskip}{0pt}
\includegraphics[width=\linewidth]{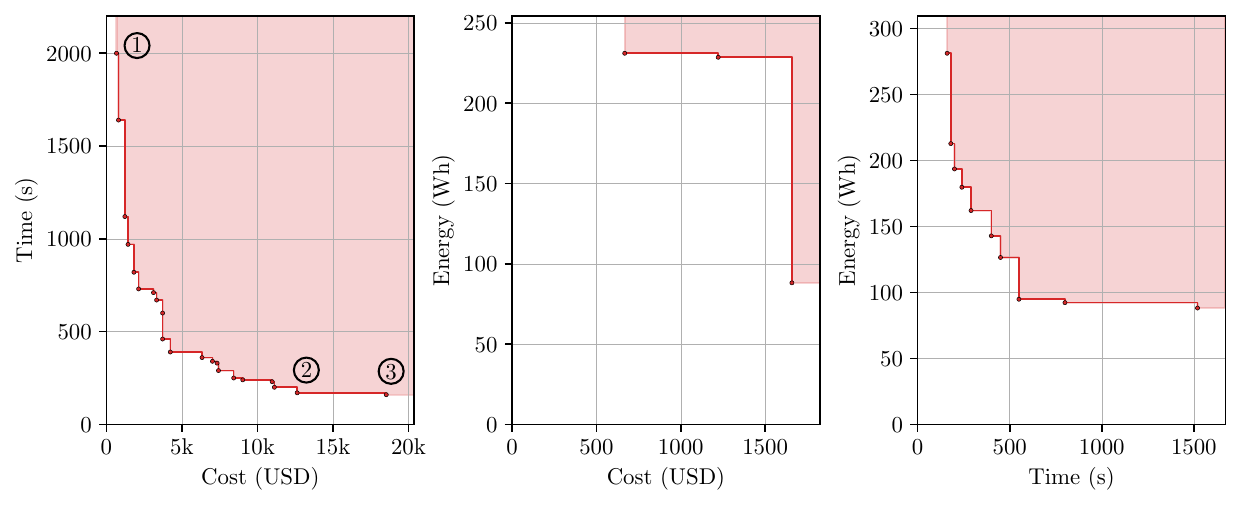}
\caption{%
Pairwise two-dimensional projections of the Pareto front for Case Study~I.
Left: \R{mission completion time} versus \R{total cost}.
Middle: \R{total energy} versus \R{total cost}.
Right: \R{total energy} versus \R{mission completion time}.
}
\label{fig:pareto_front_case_study_1}
\end{figure}

We highlight three points in the completion-time versus cost projection, which captures the speed--cost trade-off.
Design~\Circle{1} minimizes cost: a single medium-sized aerial robot equipped with the lower-cost actuation module~V1 and a 231.10\,Wh NiMH battery, coordinated by \gls{acr:darp}.
This configuration completes the mission in 33\,min\,20\,s at a cost of 667.77\,USD.
At the opposite extreme, design~\Circle{3} minimizes completion time.
It deploys two ground robots (14.97\,Wh NiH$_2$ batteries), three medium-sized aerial robots (22.06\,Wh NiH$_2$ batteries), and three large aerial robots (68.76\,Wh NiH$_2$ batteries), with the higher-performance actuation module available (V2 for the ground and medium aerial robots, V1 for the large aerial robots).
This fleet uses the \gls{acr:agd} planner and finishes in 160\,s, requiring 18,528.80\,USD and 302.40\,Wh.

Design~\Circle{2} illustrates the value of joint reasoning across design layers:
three large aerial robots with 99.42\,Wh batteries and V1 actuation, coordinated by \gls{acr:darp}, complete the mission in 170\,s at 12,628.40\,USD, only 10\,s slower than design~\Circle{3} while saving 5,900.40\,USD.
Such a non-obvious trade-off would be difficult to identify without simultaneously optimizing robot modules, fleet multiplicities, and planner choice.

Overall, Case Study~I shows that the framework produces interpretable Pareto fronts together with the implementation choices that realize each point, enabling principled exploration of system-level trade-offs.

\subsection{Case Study~II (Co-Design Vs. Sequential Optimization)}
In the second case study, we compare the joint co-design approach with two sequential baselines that each fix one subsystem and optimize the other.
The first baseline fixes the planner to \gls{acr:agd} and optimizes only the robot designs and fleet composition.
The second baseline fixes the robot designs—large aerial robot with V1 actuation, medium aerial robot, and ground robot with V2 actuation—and optimizes only the planner selection.
\Cref{fig:pareto_front_case_study_2} compares the Pareto fronts obtained by these baselines against the full co-design solution from the previous case study.

As expected from the Pareto-optimality guarantees of co-design, the jointly optimized front lies on or below the sequential baselines across all three projections.
Points shared between the co-design front and a sequential baseline correspond to designs that happen to remain optimal despite the fixed subsystem choice, whereas points appearing exclusively on the co-design front represent solutions that can only be uncovered when all design layers are optimized simultaneously.
Notably, the sequential baseline fronts cross each other in the \R{time}--\R{cost} and \R{energy}--\R{time} projections, indicating that neither fixed-subsystem strategy uniformly dominates the other.

For a quantitative comparison, we report the \gls{acr:hv}, \gls{acr:gdp}, and \gls{acr:igdp} indicators~\cite{li2019quality} for each front in \Cref{tab:indicators}; all fronts are normalized to $[0,1]$ before computation.
\gls{acr:hv} measures the volume between a dominated reference point and a given front and is maximized by the true Pareto front.
\gls{acr:gdp} and \gls{acr:igdp} are complementary distance-based indicators: \gls{acr:gdp} measures the maximum distance from a candidate front to a reference front (capturing the worst-case gap), while \gls{acr:igdp} measures the maximum distance from the reference front to the candidate (capturing the worst-case omission). 
Both are zero if and only if the two fronts coincide, and lower values indicate closer agreement with the reference.
We use the co-design Pareto front as the reference front for \gls{acr:gdp} and \gls{acr:igdp}, since it is guaranteed to dominate the sequential baselines by construction. For \gls{acr:hv}, we use the reference point~$(1,1,1)$, corresponding to the worst normalized value in each objective.
The results show that joint optimization improves the \gls{acr:hv} by up to 15\% over the sequential baselines. 
Moreover, the degradation in \gls{acr:hv} and \gls{acr:igdp} is larger under the fixed-robots baseline, indicating that robot design choices have a greater impact on overall system performance than planner selection alone.

Finally, we report the computational effort required to obtain the results for this case study.
The dominant cost lies in the simulation-based evaluation of planner and executor implementations (waypoint generation and trajectory execution), which requires approximately 16 hours of CPU time.
The co-design solver itself adds only approximately 30 minutes, confirming that the framework introduces modest overhead relative to the underlying simulations.

\begin{figure}[tb]
\centering
\setlength{\abovecaptionskip}{0pt}
\includegraphics[width=\linewidth]{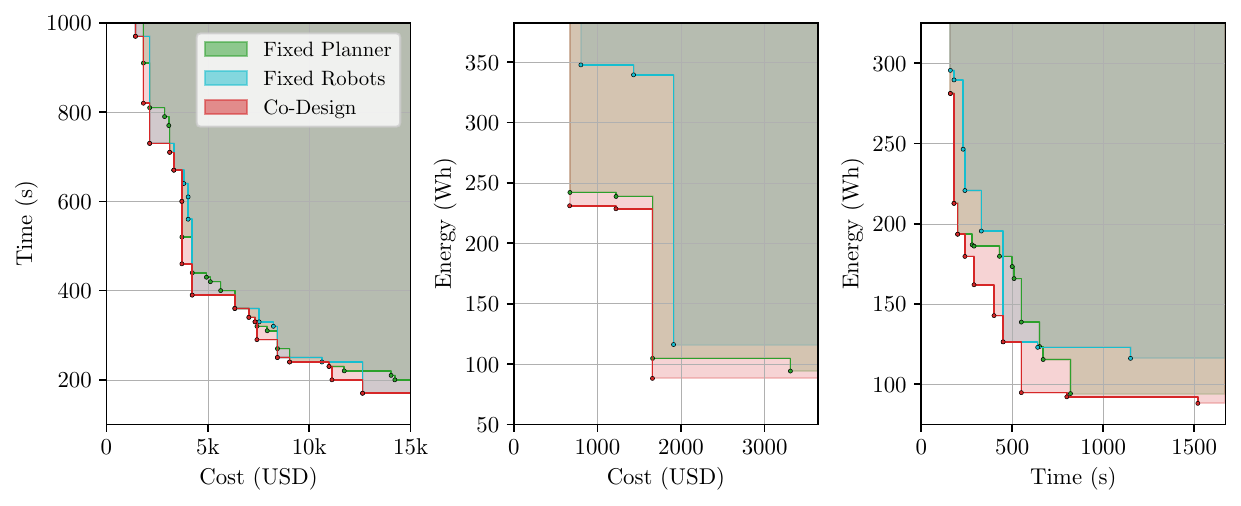}
\caption{
Pairwise two-dimensional projections of the Pareto fronts for Case Study~II, comparing co-design with sequential baselines.
Left: \R{mission completion time} versus \R{total cost}.
Middle: \R{total energy} versus \R{total cost}.
Right: \R{total energy} versus \R{mission completion time}.
}
\label{fig:pareto_front_case_study_2}
\end{figure}

\input{Figures/table_indicators}

\subsection{Case Study~III (Homogeneous vs. Heterogeneous)}
This case study demonstrates how additional structural constraints on the fleet can be expressed and optimized over within the same pipeline.
The task profile is identical to Case Study~I ($m_c \geq 0.95$ coverage of a $400\,\mathrm{m} \times 800\,\mathrm{m}$ workspace), and we compare: (i) the unconstrained setting permitting heterogeneous fleets (as before), and (ii) a constrained setting restricting the Fleet Composer to ground robots only (maximum fleet size of nine).

\Cref{fig:pareto_front_case_study_3} reports two projections of the resulting Pareto fronts.
In the homogeneous setting, all Pareto-optimal designs select \gls{acr:darp}, consistent with its original design objective for homogeneous coverage planning.
The dominance patterns differ across resource planes.
In the cost--time projection, heterogeneous fleets strictly dominate: for every Pareto-optimal homogeneous design, there exists a heterogeneous alternative that is both faster and cheaper.
In the energy--time projection, homogeneous ground fleets dominate the heterogeneous ones.
This can be explained by platform characteristics: ground robots are priced between the medium and large aerial robots but have substantially lower maximum velocities (longer mission times), while their smaller power coefficients and lower operating speeds yield significantly reduced energy consumption (cf.~\Cref{eq:instant_power_calculation}).

\begin{figure}[tb]
\centering
\setlength{\abovecaptionskip}{0pt}
\includegraphics[width=\linewidth]{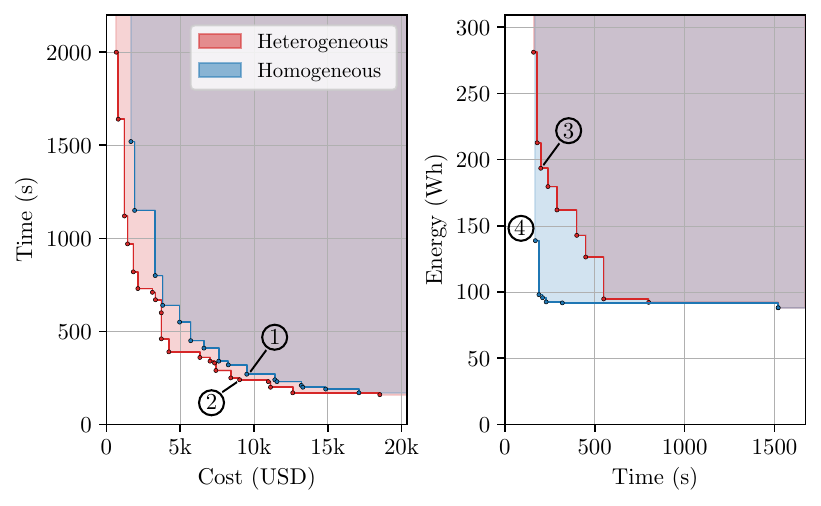}
\caption{%
Pairwise two-dimensional projections of the Pareto front for Case Study~III.
Left: \R{mission completion time} versus \R{total cost}.
Right: \R{total energy} versus \R{mission completion time}.}
\label{fig:pareto_front_case_study_3}
\end{figure}

Designs~\Circle{1} and~\Circle{2} illustrate the cost--time trade-off. 
Design~\Circle{1} is a homogeneous fleet of five ground robots with V2 actuation, coordinated by \gls{acr:darp}, completing the mission in 270\,s at 9,511.94\,USD. Design~\Circle{2} is a heterogeneous fleet, one medium-sized and two large aerial robots under the \gls{acr:agd} planner, that finishes in 240\,s at only 9,029.13\,USD. 
The heterogeneous fleet is thus simultaneously faster and cheaper.
The energy--time projection tells a different story. 
Design~\Circle{3}, a heterogeneous fleet of three ground robots and three large aerial robots, completes the mission in approximately 200\,s but consumes 193.50\,Wh. 
Design~\Circle{4}, a homogeneous fleet of nine ground robots, finishes in 170\,s while requiring only 138.80\,Wh, demonstrating that homogeneous ground fleets can be both faster and more energy-efficient when aerial platforms are excluded.

This case study demonstrates that the co-design framework allows structural constraints on multi-agent systems (such as fleet homogeneity) to be specified systematically and optimized over, yielding Pareto-optimal designs together with the concrete implementation choices that realize them while adhering to additional design constraints.

\subsection{Case Study~IV (Task Uncertainty)}
In this final case study, we demonstrate how the framework enables the design of multi-agent systems under uncertainty over deployment scenarios.
We consider two workspaces: $M_1$, a $400\,\mathrm{m} \times 800\,\mathrm{m}$ region, and $M_2$, a $500\,\mathrm{m} \times 640\,\mathrm{m}$ region containing five targets at unknown locations, modeled using the detection framework introduced in \Cref{subsec:coverage_detection_models}.
The system is optimized over three task profiles:
\begin{itemize}
    \item \textbf{Task~1:} $m_c \geq 0.95$ on $M_1$ (coverage only).
    \item \textbf{Task~2:} Detect at least three targets on $M_2$ with per-target probability $\delta \geq 0.95$ (detection only).
    \item \textbf{Task~3:} Satisfy both Task~1 and Task~2 simultaneously.
\end{itemize}
Tasks~1 and~2 represent scenarios in which the deployment condition is known a priori and the fleet is optimized for a single mission. 
Task~3, by contrast, models uncertainty over the deployment scenario: the fleet must be capable of fulfilling either mission, requiring designs that are robust across both task profiles.
We visualize the relation between the three tasks in \Cref{fig:cs3_tasks}.
The uncertain deployment setting is accommodated within our framework through the higher-order collections of waypoints, trajectories, and metrics introduced in \Cref{subsec:abstraction_details}.

\begin{figure}[tb]
\centering
\begin{tikzpicture}
\tikzset{edge/.style={draw=gray, thick}}
    \matrix (A) [matrix of nodes, row sep=0.5cm, column sep=0.3cm]
    {%
    & \textbf{Task 3} & \\
    \textbf{Task 1} & & \textbf{Task 2} \\
    };
    \draw[edge] (A-1-2)--(A-2-1);
    \draw[edge] (A-1-2)--(A-2-3);
\end{tikzpicture}
\caption{
Relation between the tasks in Case Study~IV.
A fleet that satisfies Task~3 is guaranteed to satisfy Task~1 and Task~2, while Task~1 and Task~2 are incomparable. 
}
\label{fig:cs3_tasks}
\end{figure}
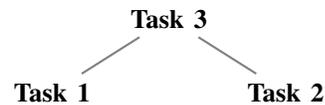

\Cref{fig:pareto_front_case_study_4} shows the resulting Pareto fronts for Case Study~IV. 
The figure consists of two pairwise projections: the left plot shows \R{mission completion time} versus \R{total cost}, while the right plot shows \R{total energy} versus \R{mission completion time}.
Before analyzing individual implementations, we first examine the overall structure of the Pareto fronts.
Empirically, the Pareto front for Task~1 is contained within that for Task~2 in both projections, indicating that (in this instantiation) achieving $95\%$ coverage of $M_1$ is more resource-intensive than detecting three targets on $M_2$ at $\delta=0.95$.
Importantly, certain points on the Task~3 front coincide with neither the Task~1 nor the Task~2 front (e.g., design~\Circle{2}).
These solutions are Pareto-optimal \emph{only} under task uncertainty and therefore would not emerge from optimizing for either task in isolation.

\begin{figure}[tb]
\centering
\setlength{\abovecaptionskip}{0pt}
\includegraphics[width=\linewidth]{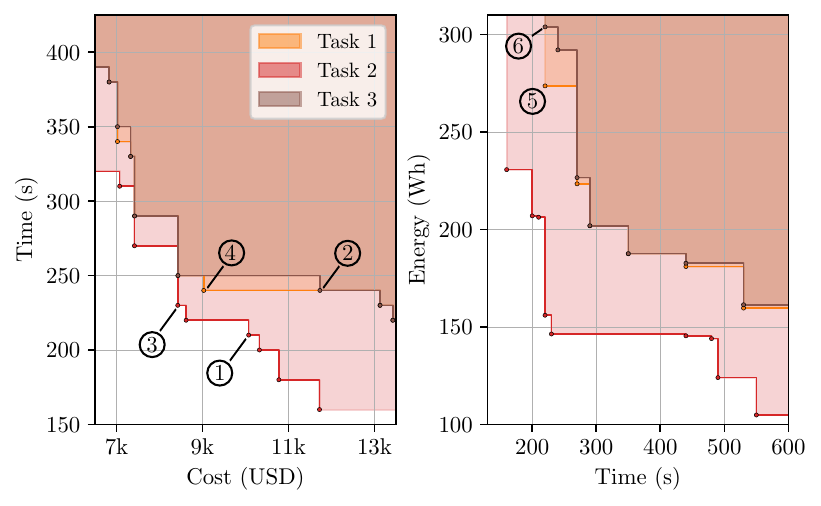}
\caption{%
Pairwise two-dimensional projections of the Pareto front for Case Study~IV.
Left: \R{mission completion time} versus \R{total cost}.
Right: \R{total energy} versus \R{mission completion time}.
}
\label{fig:pareto_front_case_study_4}
\end{figure}

Beyond these structural properties, changes in the task profile can fundamentally alter the optimal \emph{planner} selection, even when budgets are comparable.
Designs~\Circle{3} and~\Circle{4} illustrate this effect.
For Task~1, a budget of \$9029.13 yields a heterogeneous fleet of one medium and two large aerial robots under the \gls{acr:agd} planner (240\,s).
For Task~2, a comparable budget of \$8424.56 produces a different solution: two large aerial robots under the \gls{acr:darp} planner (230\,s).
The shift to \gls{acr:darp} is noteworthy: although traditionally associated with homogeneous coverage, its denser visitation patterns--induced by its use of the minimum sensing radius in heterogeneous fleet settings--increase cumulative exposure (cf.~\Cref{eq:exposure,eq:fleet_hit_probability}), making it advantageous for probabilistic detection even when it is suboptimal for strict coverage efficiency.
We also observe instances where the \emph{same} fleet composition remains Pareto-optimal across Task~1 and Task~2 while the co-design solution switches the planner between \gls{acr:agd} (coverage) and \gls{acr:darp} (detection), highlighting that planner choice is itself a first-class design variable whose optimality can depend strongly on task structure.

Finally, designs~\Circle{5} and~\Circle{6} reveal a non-intuitive resource dependency.
Both correspond to the same fleet (two robots of each type with identical actuation and battery configurations), yet they exhibit different energy requirements across tasks.
Although Task~2 is generally less demanding when comparing Pareto-optimal designs, this particular configuration consumes more energy on Task~2 than on Task~1.
Since the energy requirement under Task~3 is the maximum of the per-task energy demands, this yields different Pareto-optimal resource trade-offs under uncertainty.
This illustrates that resource usage depends not only on task ``difficulty'' in isolation, but also on the interaction between fleet configuration, planner-induced visitation patterns, and task-specific metrics---dependencies that the co-design framework captures explicitly.

Overall, Case Study~IV demonstrates that the framework naturally extends to multi-task settings and uncertain deployment conditions.
By jointly optimizing robot design, fleet composition, and planning across tasks, it identifies Pareto-optimal designs that would not emerge from single-task optimization and exposes non-obvious interactions between planner structure, fleet heterogeneity, and task metrics.

%% file: Figures/table_battery_modules.tex
\begin{table}[tb]
    \setlength{\abovecaptionskip}{1pt}
    \centering
    \caption{Available battery modules}
    \renewcommand{\arraystretch}{1}
    \label{tab:battery-modules}
    \begin{NiceTabular}{c|cc}
        \toprule
        \textbf{Technology} & $\boldsymbol{\rho}$ (Wh/kg) & $\boldsymbol{\alpha}$ (Wh/USD) \\
        \midrule
        LiPo   & 250 & 2.50 \\
        LCO    & 195 & 2.84 \\
        LMO    & 150 & 2.84 \\
        NiH$_2$ & 45  & 10.5 \\
        NiMH   & 100 & 3.41 \\
        LFP    & 90  & 1.50 \\
        NiCad  & 30  & 0.50 \\
        SLA    & 30  & 7.0 \\
        \bottomrule
    \end{NiceTabular}
\end{table}

%% file: Figures/table_actuation_modules.tex
\begin{table}[tb]
    \setlength{\abovecaptionskip}{1pt}
    \centering
    \caption{Available actuation modules.}
    \renewcommand{\arraystretch}{1}
    \label{tab:actuation-modules}
    \resizebox{\columnwidth}{!}{%
    \begin{NiceTabular}{c|cccccccccccc}
        \toprule
        \Block{2-1}{\textbf{Robot}} & 
        \Block{2-1}{\textbf{Variant}} &
        $\boldsymbol{v_{\max}}$ &
        $\boldsymbol{a_{\max,\mathrm{lat}}}$ &
        $\boldsymbol{a_{\max,\mathrm{long}}}$ &
        $\boldsymbol{r_{\text{turn}}}$ &
        $\boldsymbol{\dot{\theta}_{\max}}$ &
        $\boldsymbol{c_{\text{vel}}}$ &
        $\boldsymbol{c_{\text{acc}}}$ &
        $\boldsymbol{m_{\text{max}}}$ &
        $\boldsymbol{P_{\text{idle}}}$ &
        \textbf{Cost} &
        \textbf{Mass} \\
        & &
        (m/s) & (m/s$^{2}$) & (m/s$^{2}$) & (m) & (rad/s) &
        (W\,s$^{2}$/m$^{2}$) & (W\,s$^{2}$/m) & kg & (W) & (USD) & kg \\
        \midrule
        Aerial (M) & V1 & 10.0 & 1.5 & 2.0 & 1.0 & 1.0 & 4 & 7 &  5 & 50 &  500 & 1 \\
        Aerial (M) & V2 & 13.0 & 1.5 & 2.0 & 1.0 & 1.0 & 5 & 9 &  7 & 70 &  600 & 1.5\\
        Aerial (L) & V1 & 20.0 & 2.0 & 4.0 & 3 & 1.5 & 10 & 20 & 15 & 250 & 4000 & 4.0 \\
        Aerial (L) & V2 & 16.0 & 1.5 & 3.0 & 3 & 1.5 & 8 & 14 & 10 & 200 & 3500 & 3.5 \\
        Ground & V1 &  5.0 & 1.5 & 1.0 & 1.0 & 1.0 & 1 & 3 & 100 & 150 & 1500 & 10\\
        Ground & V2 &  7.0 & 1.5 & 1.0 & 1.0 & 1.0 & 3 & 5 & 100 & 200 & 1750 & 12\\
        \bottomrule
    \end{NiceTabular}%
    }
\end{table}

%% file: Figures/table_sensing_modules.tex
\begin{table}[tb]
    \setlength{\abovecaptionskip}{1pt}
    \centering
    \renewcommand{\arraystretch}{1.0}
    \caption{Available sensing modules.}
    \label{tab:sensing-modules}
    \resizebox{\columnwidth}{!}{
    \begin{NiceTabular}{c|cccccccc}
        \toprule
        \textbf{Robot} & \textbf{Variant} &
        $\boldsymbol{r_{\text{s}}}$ (m) &
        $\lambda_{\mathrm{base}}$ &
        $\sigma_d$ (m) &
        $\beta_v$ (s/m) &
        $\boldsymbol{P_{\text{req}}}$ (W) &
        \textbf{Cost} (USD) & \textbf{Mass} (kg) \\
        \midrule
        Aerial (M) & imp1 & 12 & 0.95 & 8 & 0.004 & 20 & 100 & 0.4\\
        Aerial (L) & imp1 & 40 & 0.98 & 30 & 0.02 & 68 & 200 & 1.0 \\
        Ground & imp1 & 30 & 0.99 & 26 & 0.02 & 35 & 150 & 2.5\\
        \bottomrule
    \end{NiceTabular}
    }
\end{table}

%% file: Figures/table_indicators.tex
\begin{table}[tb]
\setlength{\abovecaptionskip}{1pt}
\centering
\footnotesize
\caption{Quality indicators for the co-design and sequential baseline Pareto fronts.}
\begin{adjustbox}{max width=\columnwidth, center}
\begin{NiceTabular}{c|ccc}
    \toprule
    \textbf{Metric} & \textbf{Co-Design} & Fixed Planner &  Fixed Robots \\
    \midrule
    \gls{acr:hv} ($\uparrow$)     & $\mathbf{0.8471}$ & $0.8013$ & $0.7358$ \\
    \gls{acr:gdp} ($\downarrow$)  & $\mathbf{0.0}$    & $0.0224$ & $0.0203$ \\
    \gls{acr:igdp} ($\downarrow$) & $\mathbf{0.0}$    & $0.0139$ & $0.0271$ \\
    \bottomrule
\end{NiceTabular}
\end{adjustbox}
\label{tab:indicators}
\end{table}

%% file: Sections/7_conclusion_future_work.tex
\section{Conclusion and Future Work}
\label{sec:conclusion_future_work}

In this paper, we introduced a formal, compositional, and task-aware framework to co-design the hardware and algorithms of heterogeneous multi-agent robotic systems.
General, solution-agnostic abstractions of robots, fleets, planners, executors, and evaluators are provided, and domain experts are free to explore new designs that fit into them, without worrying about integration with other components.
We then derive co-design models based on these abstractions, and interconnect them to globally optimize the entire system for various types of tasks.
Modularity of co-design allows for more sophisticated models by further decomposing the components into sub-components.
The compositional solver provided by co-design efficiently explores the combinatorial, large design space, without exhaustively simulating all combinations.
Several case studies on coverage/search tasks demonstrate (i) how the co-design framework is applied to obtain system optimal design solutions for a specific task profile; (ii) the advantage of heterogeneous robot fleets compared to homogeneous ones; and (iii) how different task profiles affect the optimal design solutions.

As a first step in a general, compositional, collaborative, and interpretable task-aware co-design framework for heterogeneous robot fleets, this paper paves the way to universal \emph{robotics phase diagrams}.
There are many other aspects to include in the framework, with a few named here.
We assume a centralized planner for waypoint generation and local controllers for tracking and execution. 
Incorporating communication capabilities as design variables and studying partially distributed algorithms and their trade-offs is an important direction for future work.
Although the abstractions and co-design problems are general, small collections of robot types, tasks, and planners are considered in this paper.
Exploring larger catalogs will get us closer to the final robot phase diagram.
Recent work \cite{huang2026distributional} introduces distributional uncertainty and adaptive decision processes into co-design.
With risk and uncertainty as important factors in real designs, incorporating them is a crucial step.
Finally, some designs are expensive to evaluate (involve experiments, computationally heavy simulations, etc.), calling for a performance-aware online learning method to cleverly decide new solutions to explore.